\newcommand{\beq}{\begin{equation}}
\newcommand{\eeq}{\end{equation}}
\newcommand{\be}{\begin{equation}}
\newcommand{\ee}{\end{equation}}
\newcommand{\beqa}{\begin{eqnarray}}
\newcommand{\eeqa}{\end{eqnarray}}
\newcommand{\bean}{\begin{eqnarray*}}
\newcommand{\eean}{\end{eqnarray*}}
\newtheorem{proposition}{Proposition}
\theoremstyle{definition}
\newtheorem{example}{Example}
\newcommand{\cA}{{\mathcal A}}
\newcommand{\cJ}{{\mathcal J}}
\newcommand{\cL}{{\mathcal L}}
\newcommand{\cS}{{\mathcal S}}
\newcommand{\cT}{{\mathcal T}}
\newcommand{\vh}{{\mathbf h}}
\newcommand{\vp}{{\mathbf p}}
\newcommand{\vx}{{\mathbf x}}
\newcommand{\vX}{{\mathbf X}}
\newcommand{\vW}{{\mathbf W}}
\newcommand{\E}{{\mathbb E}}
\newcommand{\Proba}{{\mathbb P}}
\newcommand{\bR}{{\mathbb R}}
\def\to{\rightarrow}
\title{Learning the Arrow of Time}
\author{
Nasim Rahaman$^{1, 2, 3}$ \; Steffen Wolf$^{1}$
\;  Anirudh Goyal$^{2}$ \; Roman Remme$^1$ \; 
Yoshua Bengio$^2$ \\ \\ 
\vspace{-0.3cm} 
\\ %
$^1$ Image Analysis and Learning Lab \\
\hspace{0.0cm} Ruprecht-Karls-Universit\"at \\
\hspace{0.0cm} Heidelberg, Germany \\
\vspace{0.0cm}\\
\hspace{0.0cm} $^2$ Mila \\
\hspace{0.0cm} Montr\'eal, Quebec, Canada 
\\
\vspace{0.0cm}\\
\hspace{0.0cm} $^3$ Max-Planck Institute for Intelligent Systems \\
\hspace{0.0cm} T\"ubingen, Germany 
}
\begin{document}

\maketitle
\vspace{-5pt}
\begin{abstract}
\vspace{-5pt}
We humans seem to have an innate understanding of the asymmetric progression of time, which we use to efficiently and safely perceive and manipulate our environment. Drawing inspiration from that, we address the problem of learning an \emph{arrow of time} in a Markov (Decision) Process. We illustrate how a learned arrow of time can capture meaningful information about the environment, which in turn can be used to measure reachability, detect side-effects and to obtain an intrinsic reward signal. We show empirical results on a selection of discrete and continuous environments, and demonstrate for a class of stochastic processes that the learned arrow of time agrees reasonably well with a known notion of an arrow of time given by the celebrated Jordan-Kinderlehrer-Otto result. 
\end{abstract}

\vspace{-10pt}
\section{Introduction} \label{intro}
The asymmetric progression of time has a profound effect on how we, as agents, perceive, process and manipulate our environment. Given a sequence of observations of our familiar surroundings (e.g. as photographs or video frames), we possess the innate ability to predict whether the said observations are ordered \emph{correctly}. We use this ability not just to perceive, but also to act: for instance, we know to be cautious about dropping a vase, guided by the intuition that the act of breaking a vase cannot be undone. It is manifest that this profound intuition reflects some fundamental properties of the world in which we dwell, and in this work, we ask whether and how these properties can be exploited to learn a representation that functionally mimics our understanding of the asymmetric nature of time. 

In his book \emph{The Nature of Physical World} \cite{eddington1929nature}, British astronomer Sir Arthur Stanley Eddington coined the term \emph{Arrow of Time} to denote this inherent asymmetry. It was attributed to the non-decreasing nature of the total thermodynamic entropy of an isolated system, as required by the second law of thermodynamics. However, the mathematical groundwork required for its description was already laid by \citet{lyapunov1892general} in the context of dynamical systems. Since then, the notion of an arrow of time has been formalized and explored in various contexts, spanning not only physics \citep{prigogine1978time, jordan1998variational, crooks1999entropy} but also algorithmic information theory \citep{zurek1989algorithmic, zurek1998decoherence}, causal inference \citep{janzing2016algorithmic} and time-series analysis \citep{janzing2010entropy, bauer2016arrow} . 

Expectedly, the notion of irreversiblity plays a central role in the discourse. In his Nobel lecture, \citet{prigogine1978time} posits that irreversible processes induce the arrow of time\footnote{Even for systems that are reversible at the microscopic scale, the unified integral fluctuation theorem \citep{seifert2012stochastic} shows that the ratio of the probability of a trajectory and its time-reversed counterpart grows exponentially with the amount of entropy the former produces.}. At the same time, the matter of reversibility has received considerable attention in reinforcement learning, especially in the context of safe exploration \citep{hans2008safe, moldovan2012safe, eysenbach2017leave}, learning backtracking models \citep{goyal2018recall, nair2018time} and AI-Safety \citep{amodei2016concrete, krakovna2018measuring}. In these applications, \emph{learning} a notion of (ir)reversibility is of paramount importance: for instance, the central premise of safe exploration is to avoid states that prematurely and irreversibly terminate the agent and/or damage the environment. It is related (but not identical) to the problem of detecting and avoiding side-effects, in particular those that adversely affect the environment. In \citet{amodei2016concrete}, the example considered is that of a cleaning robot tasked with moving a box across a room. The optimal way of successfully completing the task might involve the robot doing something disruptive, like knocking a vase over. Such disruptions might be difficult to recover from; in the extreme case, they might be virtually irreversible -- say when the vase is broken. 

The scope of this work includes detecting and quantifying such disruptions by \emph{learning the arrow of time} of an environment\footnote{Detecting the arrow of time in videos has been studied \citep{wei2018learning, pickup2014seeing}.}. Concretely, we aim to learn a \emph{potential} (scalar) function on the state space. This function must \emph{keep track of the passage of time}, in the sense that states that tend to occur in the \emph{future} -- states with a larger number of broken vases, for instance -- should be assigned larger values. To that end, we first introduce a general objective functional (Section~\ref{sec:formalism}) and study it analytically for toy problems (Section~\ref{sec:theoretical_analysis}). We continue by interpreting the solution to the objective (Section~\ref{sec:interpretation}) and highlight its applications (Section~\ref{sec:applications}). To tackle more complex problems, we parameterize the potential function by a neural network and present a stochastic training algorithm (Section~\ref{sec:experiments}). Subsequently, we demonstrate results on a selection of discrete and continuous environments and discuss the results critically, highlighting both the strengths and shortcomings of our method (Section~\ref{sec:experiments}). Finally, we place our method in a broader context by empirically elucidating connections to the theory of stochastic processes and the variational Fokker-Planck equation (Section \ref{sec:ito_sdes}). 

\section{The $h$-Potential}
\subsection{Formalism} \label{sec:formalism}
\textbf{Preliminaries.} In this section, we will represent the arrow of time as a scalar function $h$ that increases (in expectation) over time. Given a Markov Decision Process (\emph{Environment}), let $\cS$ and $\cA$ be its state and action space (respectively). A policy $\pi$ is a function mapping a state $s \in \cS$ to a distribution over the action space, $\pi(a | s) \in \Proba(\cA)$. Given $\pi$ and some distribution over the states, we call the sequence $(s_0, s_1, ..., s_{N})$ a state-transition trajectory $\tau_{\pi}$, where we have $s_{t + 1} \sim p_{\pi}(s_{t + 1}|s_{t}) = \sum_{a} p(s_{t + 1}|s_{t}, a_t)\pi(a_t | s_t)$ and $s_0 \sim p^{0}(s)$ for some initial state distribution $p^{0}(s)$. In this sense, $\tau_{\pi}$ can be thought of as an instantiation of the Markov (stochastic) process with transitions characterized by $p_{\pi}$. 

\textbf{Methods.} Now, for any given function $h: \mathcal{S} \to \mathbb{R}$, one may define the following functional: 
\begin{align} \label{eq:objective_wo_reg}
\cJ_{\pi}[h] &= \E_{t}\E_{(s_{t} \to s_{t + 1}) \sim \tau^{\pi}}[h(s_{t + 1}) - h(s_t)] = \E_{t} \E_{s_t} \E_{s_{t + 1}}[h(s_{t + 1}) - h(s_t) | s_t]
\end{align}
where the expectation is over the state transitions of the Markov process $\tau_{\pi}$ and the time-step $t$. We now define: 
\beq \label{eq:objective_w_reg}
h_{\pi} = \arg \max \{\cJ_{\pi}[h] + \lambda \cT[h] \}
\eeq
where $\cT$ implements some regularizer (e.g. $L_2$, etc.) weighted by $\lambda$. The $\cJ_{\pi}$ term is maximized if the quantity $h_{\pi}(s_t)$ increases in expectation\footnote{Note that while $\cJ_{\pi}[h]$ requires $h$ to increase along all trajectories \emph{in expectation}, it does not guarantee that it must increase along \emph{all} trajectories.} with increasing $t$, whereas the regularizer $\cT$ ensures that $h_{\pi}$ is well-behaved and does not diverge to infinity in a finite domain\footnote{Under certain conditions, $h_{\pi}$ resembles the negative stochastic discrete-time Lyapunov function \citep{li2013stability} of the Markov process $\tau_{\pi}$.}. In what follows, we simplify notation by using $h_{\pi}$ and $h$ interchangeably. 
\subsection{Theoretical Analysis} \label{sec:theoretical_analysis}
\begin{wrapfigure}{r}{0.35\textwidth}
\vspace{-10pt}
\centering
\resizebox{0.35\textwidth}{!}{\tikzset{every picture/.style={line width=0.75pt}} 

\begin{tikzpicture}[x=0.75pt,y=0.75pt,yscale=-1,xscale=1]

\draw   (120,149.28) .. controls (120,136.18) and (130.62,125.56) .. (143.72,125.56) .. controls (156.82,125.56) and (167.44,136.18) .. (167.44,149.28) .. controls (167.44,162.38) and (156.82,173) .. (143.72,173) .. controls (130.62,173) and (120,162.38) .. (120,149.28) -- cycle ;
\draw   (263.81,149.28) .. controls (263.81,136.18) and (274.43,125.56) .. (287.53,125.56) .. controls (300.63,125.56) and (311.25,136.18) .. (311.25,149.28) .. controls (311.25,162.38) and (300.63,173) .. (287.53,173) .. controls (274.43,173) and (263.81,162.38) .. (263.81,149.28) -- cycle ;
\draw    (159.83,131.33) .. controls (190.19,100.31) and (239.5,100) .. (272.63,129.11) ;
\draw [shift={(273.63,130.01)}, rotate = 222.17000000000002] [fill={rgb, 255:red, 0; green, 0; blue, 0 }  ][line width=0.75]  [draw opacity=0] (8.93,-4.29) -- (0,0) -- (8.93,4.29) -- cycle    ;

\draw    (161.15,170.47) .. controls (191.24,199.35) and (241.86,199.39) .. (275.83,169.33) ;

\draw [shift={(159.33,168.67)}, rotate = 45.79] [fill={rgb, 255:red, 0; green, 0; blue, 0 }  ][line width=0.75]  [draw opacity=0] (8.93,-4.29) -- (0,0) -- (8.93,4.29) -- cycle    ;
\draw    (302,130) .. controls (359.54,100.3) and (360.12,198.01) .. (301.91,170.38) ;
\draw [shift={(300.13,169.5)}, rotate = 386.95] [fill={rgb, 255:red, 0; green, 0; blue, 0 }  ][line width=0.75]  [draw opacity=0] (8.93,-4.29) -- (0,0) -- (8.93,4.29) -- cycle    ;

\draw    (128.63,131) .. controls (70.21,99.81) and (69.63,198) .. (126.87,169.89) ;
\draw [shift={(128.63,169)}, rotate = 512.28] [fill={rgb, 255:red, 0; green, 0; blue, 0 }  ][line width=0.75]  [draw opacity=0] (8.93,-4.29) -- (0,0) -- (8.93,4.29) -- cycle    ;

\draw (217,90) node [scale=1.2]  {$\alpha $};
\draw (320.67,190.67) node [scale=1.2]  {$\alpha $};
\draw (216.67,208) node [scale=1.2]  {$1-\alpha $};
\draw (106.67,104) node [scale=1.2]  {$1-\alpha $};
\draw (143.72,149.28) node [scale=1.2]  {$s_1$};
\draw (287.53,149.28) node [scale=1.2]  {$s_2$};

\end{tikzpicture}\unskip}
\caption{\small A two variable Markov chain where the reversibility of the transition from the first state to the second is parameterized by $\alpha$.}\label{fig:two_var_mc}
\vspace{-10pt}
\end{wrapfigure}
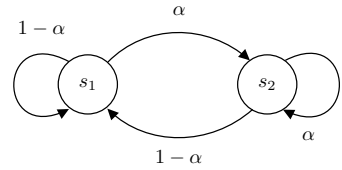

The optimization problem formulated in Eqn~\ref{eq:objective_w_reg} can be studied analytically: in Appendix~\ref{app:theoretical_analysis}, we derive the analytic solutions for Markov processes with discrete state-spaces and known transition matrices. The key result of our analysis is a characterization of how the optimal $h$ must behave for the considered regularization schemes. Further, we evaluate the solutions for illustrative toy Markov chains in Fig~\ref{fig:two_var_mc} and \ref{fig:four_var_mc} to learn the following. 

\textbf{First}, consider the two variable Markov chain in Fig~\ref{fig:two_var_mc} where the initial state is either $s_1$ or $s_2$ with equal probability. If $\alpha > 0.5$, the transition from $s_1$ to $s_2$ is more likely than the reverse transition from $s_2$ to $s_1$. In this case, one would expect that $h(s_1) < h(s_2)$ and that $h(s_2) - h(s_1)$ increases with $\alpha$, which is indeed what we find\footnote{cf. Examples~\ref{exp:two_var_mdp} and ~\ref{exp:two_var_mdp_tr} in Appendix~\ref{app:theoretical_analysis}.} given an appropriate regularizer. Conversely, if $\alpha = 0.5$, the transition between $s_1$ and $s_2$ is equally likely in either direction (i.e. it is \emph{fully reversible}), and we obtain $h(s_1) = h(s_2)$. \textbf{Second}, consider the four variable Markov chain in Fig~\ref{fig:four_var_mc} where the initial state is $s_1$ and all state transitions are irreversible. Inituitively, one should expect that $h(s_1) < h(s_2) < h(s_3) < h(s_4)$, and $h(s_2) - h(s_1) = h(s_3) - h(s_2) = h(s_4) - h(s_3)$, given that all state transitions are \emph{equally irreversible}. We obtain this behaviour with an appropriate regularizer\footnote{cf. Example~\ref{exp:four_var_mdp_tr} in Appendix~\ref{app:theoretical_analysis}.}. 

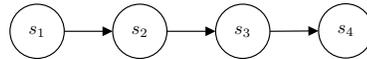
\begin{wrapfigure}{r}{0.35\textwidth}
\vspace{-10pt}
\centering
\resizebox{0.35\textwidth}{!}{\tikzset{every picture/.style={line width=0.75pt}} 

\begin{tikzpicture}[x=0.75pt,y=0.75pt,yscale=-1,xscale=1]

\draw   (121,149.28) .. controls (121,136.18) and (131.62,125.56) .. (144.72,125.56) .. controls (157.82,125.56) and (168.44,136.18) .. (168.44,149.28) .. controls (168.44,162.38) and (157.82,173) .. (144.72,173) .. controls (131.62,173) and (121,162.38) .. (121,149.28) -- cycle ;
\draw   (210.81,149.28) .. controls (210.81,136.18) and (221.43,125.56) .. (234.53,125.56) .. controls (247.63,125.56) and (258.25,136.18) .. (258.25,149.28) .. controls (258.25,162.38) and (247.63,173) .. (234.53,173) .. controls (221.43,173) and (210.81,162.38) .. (210.81,149.28) -- cycle ;
\draw   (300.81,149.28) .. controls (300.81,136.18) and (311.43,125.56) .. (324.53,125.56) .. controls (337.63,125.56) and (348.25,136.18) .. (348.25,149.28) .. controls (348.25,162.38) and (337.63,173) .. (324.53,173) .. controls (311.43,173) and (300.81,162.38) .. (300.81,149.28) -- cycle ;
\draw   (390.81,148.95) .. controls (390.81,135.85) and (401.43,125.22) .. (414.53,125.22) .. controls (427.63,125.22) and (438.25,135.85) .. (438.25,148.95) .. controls (438.25,162.05) and (427.63,172.67) .. (414.53,172.67) .. controls (401.43,172.67) and (390.81,162.05) .. (390.81,148.95) -- cycle ;
\draw    (168.44,149.28) -- (208.81,149.28) ;
\draw [shift={(210.81,149.28)}, rotate = 180] [fill={rgb, 255:red, 0; green, 0; blue, 0 }  ][line width=0.75]  [draw opacity=0] (8.93,-4.29) -- (0,0) -- (8.93,4.29) -- cycle    ;

\draw    (258.25,149.28) -- (298.81,149.28) ;
\draw [shift={(300.81,149.28)}, rotate = 180] [fill={rgb, 255:red, 0; green, 0; blue, 0 }  ][line width=0.75]  [draw opacity=0] (8.93,-4.29) -- (0,0) -- (8.93,4.29) -- cycle    ;

\draw    (348.25,149.28) -- (388.81,148.96) ;
\draw [shift={(390.81,148.95)}, rotate = 539.55] [fill={rgb, 255:red, 0; green, 0; blue, 0 }  ][line width=0.75]  [draw opacity=0] (8.93,-4.29) -- (0,0) -- (8.93,4.29) -- cycle    ;

\draw (144.72,149.28) node [scale=1.2]  {$s_1$};
\draw (234.53,149.28) node [scale=1.2]  {$s_2$};
\draw (324.53,149.28) node [scale=1.2]  {$s_3$};
\draw (414.53,148.95) node [scale=1.2]  {$s_4$};

\end{tikzpicture}\unskip}
\caption{\small A four variable Markov chain corresponding to a sequence of irreversible state transitions.}\label{fig:four_var_mc}
\vspace{-10pt}
\end{wrapfigure}
While this serves to show that the optimization problem defined in Eqn~\ref{eq:objective_w_reg} can indeed lead to interesting solutions, an analytical treatment is not always feasible for complex environments with a large number of states and/or undetermined state transition rules. In such cases, as we shall see in later sections, one may resort to parameterizing $h$ as a function approximator and solve the optimization problem in Eqn~\ref{eq:objective_w_reg} with stochastic gradient methods. 

\vspace{-5pt}
\subsection{Interpretation and Subtleties} \label{sec:interpretation}
\vspace{-5pt}
Having defined and analyzed $h$, we turn to the task of interpreting it. Based on the analytical results presented in Section~\ref{sec:theoretical_analysis}, it seems reasonable to expect that even in interesting environments, $h$ should remain constant (in expectation) along reversible trajectories. Further, along trajectories with irreversible transitions, one may hope that $h$ not only increases, but also \emph{quantifies} the irreversibility in some sense. In Section~\ref{sec:experiments}, we empirically investigate if this is indeed the case. But before that, there are two conceptual aspects that warrant closer scrutiny. 

The first is rooted in the observation that the states $s_t$ are collected by a given but arbitrary policy $\pi$. In particular, there may exist \emph{demonic}\footnote{This is indeed an allusion to Maxwell's Demon, cf. \citet{thomson1874kinetic}.} policies for which the resulting arrow-of-time is unnatural, perhaps even misleading. Consider for instance the actions of a practitioner of Kintsugi, the ancient Japanese art of repairing broken pottery. The corresponding policy might cause the environment to transition from a state where the vase is broken to one where it is not. If we learn $h$ on such \emph{demonic} (or expert) trajectories\footnote{By doing so, we solve an inverse RL problem \citep{ng2000algorithms}.}, it might be the case that counter to our intuition, states with a larger number of broken vases are assigned smaller values (and the vice versa). Now, we may choose to resolve this conundrum by defining 
\beq \label{eq:full_objective}
\cJ[h] = \E_{\pi \sim \Pi} \cJ_{\pi}[h]
\eeq where $\Pi$ is the set of all policies defined on $\cS$, and $\sim$ denotes uniform sampling. The resulting function $h^{*} = \arg \max \{\cJ[h] + \cT\}$ would characterize the arrow-of-time w.r.t. all possible policies, and one would expect that for a vast majority of such policies, the transition from broken vase to a intact vase is rather unlikely and/or requires highly specialized policies. 

Unfortunately, determining $h^{*}$ is not feasible for most interesting applications, given the outer expectation over \emph{all} possible policies; we therefore settle for a (uniformly) random policy which we denote by $\pi_{\sharp}$ (and the corresponding potential as $h_{\sharp}$). The simplicity (or rather, \emph{clumsiness}) of $\pi_{\sharp}$ justifies its adoption, since one would expect a \emph{demonic} policy to be rather complex and not implementable with random actions. In this sense, we ensure that the arrow of time characterizes the underlying dynamics of the environment, and not the peculiarities of a particular agent. However, the price we pay for our choice is the lack of adequate exploration in complex enough environments, although this problem plagues most model-based reinforcement learning approaches\footnote{While this is a fundamental problem, powerful methods for off-policy learning exist (cf. \citet{munos2016safe} and references therein); however, a full analysis is beyond the scope of the current work.} (cf. \citet{ha2018world}). 

The second aspect concerns what we require of environments in which the arrow of time is informative. To illustrate the matter, we consider a class of \emph{Hamiltonian} systems\footnote{Systems where Liouville's theorem holds. Further, the Hamiltonian is assumed to be time-independent.}, a typical instance of which could be a billiard ball moving on a frictionless arena and bouncing (elastically) off the edges\footnote{This is well studied in the context of dynamical systems and chaos theory (keyword: \emph{dynamic billiards}); see \citep{bunimovich2007billiards} and references therein.}. The state space comprises the ball's velocity and its position constrained to a \emph{billiard table} (without holes!), where the ball is initialized at a random position on the table. For such a system, it can be seen by time-reversal symmetry that when averaged over a large number of trajectories, the state transition $s \to s'$ is just as likely as the reverse transition $s' \to s$. In this case, one should expect the arrow of time to be constant\footnote{\citet{prigogine1978time} (page 783 et seq.) provides a more physical treatment.} (see Eqn~\ref{eq:objective_w_reg}). A similar argument can be made for systems that identically follow closed trajectories in their respective state space (e.g. a frictionless and undriven pendulum). It follows that $h$ must remain constant along the trajectory and that the arrow of time is uninformative. However, for so-called \emph{dissipative} systems, the notion of an arrow of time is pronounced and well studied \citep{prigogine1978time, willems1972dissipative}. In MDPs, dissipative behaviour may arise in situations where certain transitions are irreversible by design (e.g. bricks disappearing in Atari Breakout), or due to partial observability (e.g. for a damped pendulum, the state space does not track the microscopic processes that give rise to friction\footnote{Note that while a damped pendulum can be expressed as a Hamiltonian system \citep{mcdonald2015damped}, the Hamiltonian is time dependent.}). 

Therefore, a central premise underlying the practical utility of learning the arrow of time is that the considered MDP is indeed dissipative. Operating under this assumption, we now discuss a few applications of the arrow of time and experimentally demonstrate its learnability on non-trivial environments. 

\section{Applications with Related Work} \label{sec:applications}
\subsection{Measuring Reachability} \label{sec:reachability}
Given two states $s$ and $s'$ in $\cS$, the reachability of $s'$ from $s$ measures how difficult it is for an agent at state $s$ to reach state $s'$. The prospect of learning reachability from state-transition trajectories has been explored: in \citet{savinov2018episodic}, the approach taken involves learning a logistic regressor network $g^{\theta}: \cS \times \cS \to [0, 1]$ to predict the probability of states $s'$ and $s$ being reachable to one another within a certain number of steps (of a random policy), in which case $g(s, s') \approx 1$. However, the model $g$ is not \emph{directed}: it does not learn whether $s'$ is more likely to follow $s$, or the vice versa. Instead, we propose to learn a function $\eta_{\pi}: \cS \times \cS \to \bR$ such that $\eta_{\pi}(s \to s') \mapsto h_{\pi}(s') - h_{\pi}(s)$ where $\eta_{\pi}(s \to s')$ is said to measure the \emph{directed reachability} of state $s'$ from state $s$ by following some reference policy $\pi$. In the following, we take the reference policy as given (e.g. a random policy) and drop the $\pi$ for notational clarity. Now, $\eta$ has the following properties.

\textbf{First}, consider the case where the transition between states $s$ and $s'$ is fully reversible, i.e. when state $s$ is exactly as reachable from state $s'$ as is $s'$ from $s$. In expectation, we obtain $h(s') = h(s)$ and consequently, $\eta(s \to s') = \eta(s' \to s)$. We denote such reversible transitions with $s \leftrightarrow s'$. Now, if instead the state $s'$ is more likely to occur after state $s$ than state $s$ after $s'$, we say $s'$ is \emph{more reachable}\footnote{With respect to the reference (random) policy, which is implicit in our notation.} from $s$ than $s$ from $s'$. It follows in expectation that $h(s') > h(s)$, and consequently, $\eta(s \to s') > 0$ along with $\eta(s' \to s) = -\eta(s \to s') < 0$. \textbf{Second}, it can easily be seen that the reachability measure implemented by $\eta$ is additive by design: given three states $s_0, s_1, s_2 \in \cS$, we have that $\eta(s_0 \to s_2) = \eta(s_0 \to s_1) + \eta(s_1 \to s_2)$. As a special case, consider when $s_0 \leftrightarrow s_1$ and $s_1 \leftrightarrow s_2$: it follows that $s_0 \leftrightarrow s_2$. In words, if both transitions, from $s_0$ to $s_1$ and from $s_1$ and $s_2$, are fully reversible, it automatically follows that the transition from $s_0$ to $s_2$ is also fully reversible. \textbf{Third}, $\eta$ allows for a \emph{soft} measure of reachability. As we shall see in Section~\ref{sec:experiments}, it measures not only \emph{whether} a state $s'$ is reachable from another state $s$, but also quantifies \emph{how} reachable the former is from the latter. For instance: if the state $s_{(0)}$ is one with all vases intact, $s_{(1)}$ with one vase broken, and $s_{(100)}$ with a hundred vases broken, we find that $\eta(s_{(0)} \to s_{(100)}) \approx 100 \cdot \eta(s_{(0)} \to s_{(1)})$. This behaviour is sought-after in the context of AI-Safety \citep{krakovna2018measuring, leike2017ai}. 

While these properties are satisfactory, the following aspect should be considered to prevent potential confusion. Namely, while we expect $\eta(s' \to s) = \eta(s \to s')$ if the transition between states $s$ and $s'$ is fully reversible, the converse is not guaranteed, especially for non-ergodic environments. For instance, if a Markov chain does not admit a trajectory between states $s$ and $s'$, it might still be the case that $h(s) = h(s')$, and consequently, $\eta(s \to s') = \eta(s' \to s) = 0$. 

\subsection{Detecting and Penalizing Side Effects for Safe Exploration} \label{sec:safe_exp}
The problem of detecting and avoiding side-effects is well known and crucially important for safe exploration \citep{moldovan2012safe, eysenbach2017leave, krakovna2018measuring, armstrong2017low}. Broadly, the problem involves detecting and avoiding state transitions that permanently and irreversibly damage the agent or the environment. As such, it is not surprising that it is fundamentally related to reachability, as in the agent is prohibited from taking actions that drastically reduce the reachability between the resulting state and some predefined \emph{safe} state. In \citet{eysenbach2017leave}, the authors learn a reset policy responsible for resetting the environment to some initial state after the agent has completed its trajectory. The resulting value function of the reset policy indicates when the actual (\emph{forward}) policy executes an irreversible state transition. In contrast, \citet{krakovna2018measuring} propose to attack the problem by measuring reachability relative to a baseline state. However, determining it requires counterfactual reasoning, which in turn requires a known causal model. 

We propose to directly use the reachability measure $\eta$ defined in Section~\ref{sec:reachability} to derive a Lagrangian for safe-exploration. Let $r_t$ be the reward (potentially including an exploration bonus) at time-step $t$. The augmented reward is given by: 
\beq \label{eq:safe_reward}
\hat r_t = r_t - \beta \cdot \max\{\eta(s_{t - 1} \to s_{t}), 0\}
\eeq
where $\beta$ is a scaling coefficient. In practice, one may replace $\eta$ with $\sigma(\eta)$, where $\sigma$ is a monotonically increasing transfer function (e.g. a step function). 

Intuitively, transitions $s \to s'$ that are \emph{less reversible} cause the $h$-potential to increase, and the resulting reachability measure $\eta(s \to s') > 0$ in expectation. This in-turn incurs a penalty, which is reflected in the value function of the agent. Conversely, transitions that are reversible should have the property that $\eta(s \to s') = 0$ (also in expectation), thereby incurring no penalty. 
\subsection{Rewarding Curious Behaviour} \label{sec:curiousity}
In many environments where reinforcement learning methods shine, the reward function is assumed to be given; however, shaping a good reward function can often prove to be a challenging endeavour. It is in this context that the notion of \emph{curiosity} comes to play an important role \citep{schmidhuber2010formal, chentanez2005intrinsically, pathakICMl17curiosity, burda2018large, savinov2018episodic}. One typical approach towards encouraging curious behaviour is to seek \emph{novel} states that surprise the agent \citep{schmidhuber2010formal, pathakICMl17curiosity, burda2018large} and use the error in the agent's prediction of future states is used as a curiosity reward. This approach is, however, susceptible to the so-called noisy-TV problem, wherein an uninteresting source of entropy like a noisy-TV can induce a large curiosity bonus because the agent cannot predict its future state. \citet{savinov2018episodic} propose to define novelty in terms of (undirected) reachability - states that are easily reachable from the current state are considered less novel. 

The $h$-potential and the corresponding reachability measure $\eta$ affords another way of defining a curiosity reward: namely, states that are difficult to access by a simple reference policy (e.g. a random policy) should incur a larger reward. In other words, it encourages an agent to \emph{do hard things}, i.e. to seek states that are otherwise difficult to reach just by chance. The general form of the corresponding reward is given by: $\hat r_t = -\eta(s_{t - 1} \to s_t)$.

Despite the above being independent of the reward function defined by the environment, the latter might often align with the former: in many environments, the task at hand is to reach the least reachable state. This is readily recognized in classical control tasks like Pendulum, Cartpole and Mountain-Car, where the goal state is often the least reachable. However, if the environment's specified task requires the agent to inadvertently execute irreversible trajectories, we expect our proposed reward to be less applicable. 

\section{Algorithm and Experiments} \label{sec:experiments}
In this section, we introduce a learning algorithm for the parameterized $h$-potential and empirically validate it on a selection of discrete and continuous environments (more experiments can be found in Appendix~\ref{app:experiments}). 

For interesting MDPs with a large number of states and unknown state transition models, an analytic solution like in Section~\ref{sec:theoretical_analysis} is not feasible. In such cases, the $h$-potential can be parameterized by a neural network $h^{\theta}$ with parameters $\theta$, reducing the optimization problem in Eqn~\ref{eq:objective_w_reg} to: 
\beq \label{eq:practical_opt}
\arg \max_{\theta} \left\{\cJ_{\pi}[h^{\theta}] + \lambda \cT[h^{\theta}] \right\}
\eeq
For stochastic training, the expectation in Eqn~\ref{eq:objective_wo_reg} can be replaced by its Monte-Carlo estimate, and optimization problem in Eqn~\ref{eq:practical_opt} can be solved via stochastic gradient descent -- the details are given in Algorithm~\ref{alg:main} (Appendix ~\ref{app:algorithm}). 

Now, we turn to the question of what regularizer to use. Perhaps the simplest candidate is early stopping, wherein the network $h^{\theta}$ is simply not trained to convergence. In combination with weight-decay and/or gradient clipping, we find it to work surprisingly well in practice. Another good regularizer is the so-called \emph{trajectory regularizer} (cf. Eqn~\ref{eq:traj_reg} in Appendix~\ref{app:theoretical_analysis}): 
\beq \label{eq:traj_reg_cont}
\cT[h] = -\E_{t} \E_{s_t} \E_{s_{t + 1}}[(h(s_{t + 1}) - h(s_t))^2 | s_t]
\eeq
In words, the trajectory regularizer penalizes all changes in $h$, whereas the primary objective $\cJ$ encourages $h$ to increase along a trajectory; for an appropriate coefficient $\lambda$, a balance is found. 

\begin{wrapfigure}{r}{0.45\textwidth}
\vspace{-15pt}
\centering
\includegraphics[width=0.45\textwidth]{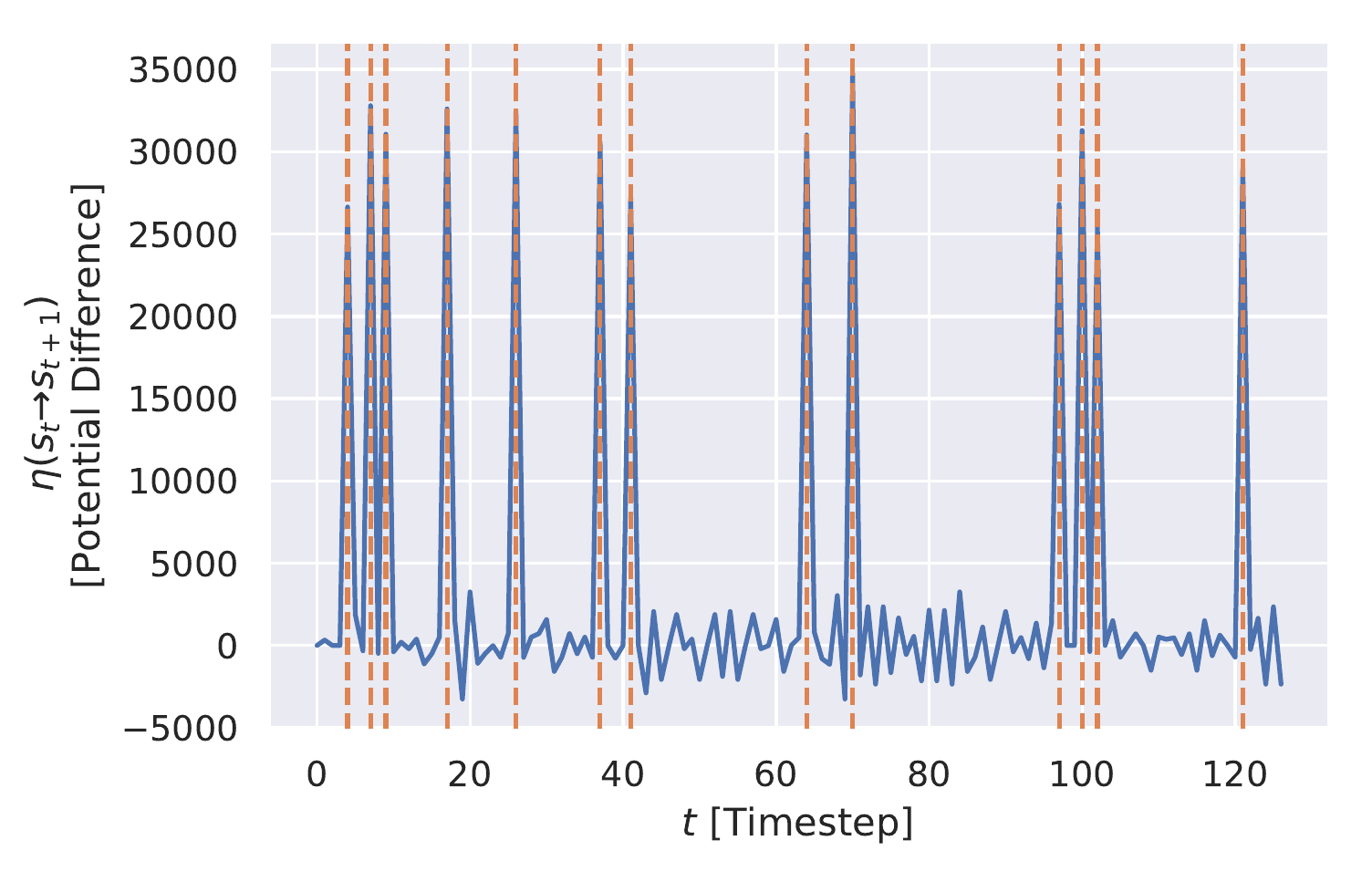}
\caption{\small The potential difference (i.e. change in $h$-potential) between consecutive states along a trajectory. The dashed vertical lines denote when a vase is broken. \textbf{Gist:} the $h$-potential increases step-wise when the agent irreversibly breaks a vase (corresponding to the spikes), but remains constant as it reversibly moves about. Further, the spikes are all of roughly the same height, indicating that the $h$-potential has learned to count the number of destroyed vases.}\label{fig:vaseworld_dpot_nonoise}
\vspace{-10pt}
\end{wrapfigure}

\textbf{2D World with Vases.\footnote{Experimental details and additional plots can be found in Appendix~\ref{app:vaseworld}.}} The environment considered is a $7 \times 7$ 2D world, where cells can be occupied by the agent, the goal and/or a vase (their respective positions are randomly sampled in each episode). If the agent enters a cell with a vase in it, the vase disappears without compromising the agent. We use a random policy to generate state-transition trajectories, which we then use to train the $h$-potential. In Fig~\ref{fig:vaseworld_pot_nonoise} (in Appendix \ref{app:vaseworld}), we plot the $h$-potential along a trajectory (parameterized by $t$) generated by a random policy. We find that $h(s_t)$ increases step-wise when the agent breaks a vase, but remains constant as it moves around -- consequently, we observe that the breaking of a vase corresponds to a spike in the $\eta(s_{t} \to s_{t + 1})$ signal (Fig~\ref{fig:vaseworld_dpot_nonoise}). Indeed, the latter is reversible whereas the former irreversibly changes the environment. Moreover, we find that the spikes in Fig~\ref{fig:vaseworld_dpot_nonoise} are of roughly similar heights, indicating that the model has learned to measure the number of vases broken, i.e. it has learned to \emph{quantify} irreversiblity, instead of merely detecting it\footnote{The trained $h$-potential can be utilized to derive a safety reward from the trained model, as elaborated in Section \ref{sec:safe_exp} (cf. Fig~\ref{fig:vaseworld_rl} in Appendix \ref{app:vaseworld}).}.

Now, to study the robustness of the $h$-potential to noise, we carry out the following two experiments. In the first of the two, we append a uniformly-random temporally uncorrelated noise signal to the state, which serves as an entropy source (i.e. a noisy-TV). In the second, we append a \emph{clock} to the state, i.e. a temporally-correlated signal that increases in constant intervals as the trajectory progresses. Fig~\ref{fig:vaseworld_dpot_tvnoise} and \ref{fig:vaseworld_dpot_causalnoise} (Appendix~\ref{app:vaseworld}) show the respective plots for the corresponding reachability $\eta$. While the former noises the background in the $\eta(s_{t} \to s_{t + 1})$ signal, the spikes remain clearly visible, suggesting that the $h$-potential is fairly robust to temporally uncorrelated sources of entropy. The latter has a more interesting effect - the $h$-potential latches on to the clock signal, which results in the baseline $\eta$ shifting up by a constant. While the spikes remain visible for the most part, this experiment shows that the model might be susceptible to spurious causal signals in the environment.

\begin{wrapfigure}{r}{0.45\textwidth}
\vspace{-10pt}
\centering
\includegraphics[width=0.45\textwidth]{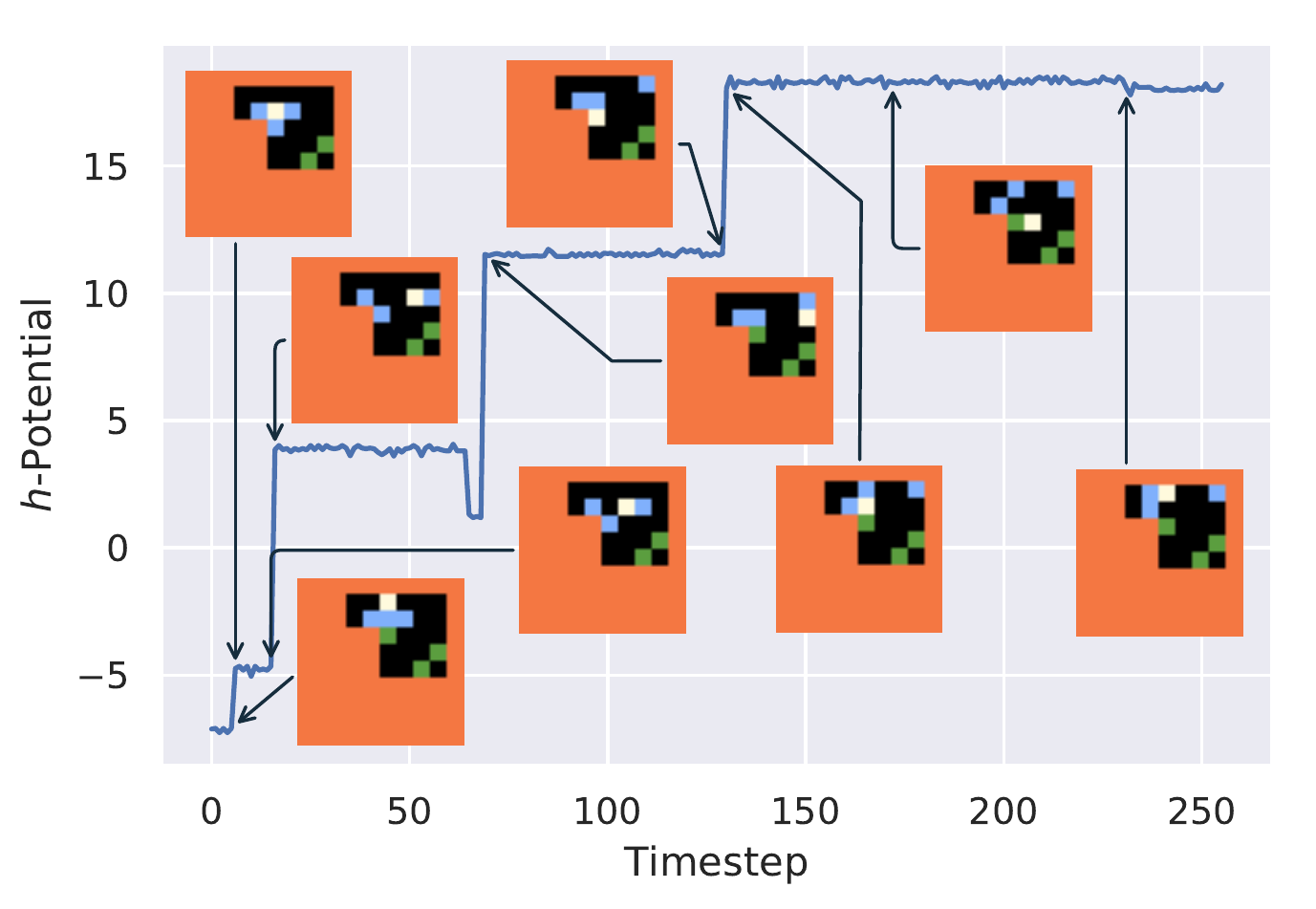}
\caption{\small The $h$-potential along a trajectory from a random policy, annotated with the corresponding state images. The white sprite corresponds to the agent, orange to a wall, blue to a box and green to a goal. \textbf{Gist:} the $h$-potential increases sharply as the agent pushes a box against the wall. While it may decrease (for a given trajectory) if the agent manages to move a box away from the wall (in this case), it increases in expectation over trajectories (cf. Fig~\ref{fig:sokoban_pot} in Appendix \ref{app:sokoban}).}\label{fig:sokoban_ent}
\vspace{-15pt}
\end{wrapfigure}
\textbf{Sokoban\footnote{Experimental details and additional plots can be found in Appendix~\ref{app:sokoban}.}} ("warehouse-keeper") is a classic puzzle video game, where an agent must push a number of boxes to set goal locations placed on a map. We use a 2D-world like implementation\footnote{Our implementation is adapted from \citet{schrader2018sokoban}.}, where each cell can be occupied by a wall, the agent or a box. Additionally, a goal marker may co-occupy a cell with all sprites except a wall. The agent may only push boxes (and not pull), rendering certain moves irreversible - for instance, when a box is pushed against a wall. Solving Sokoban requires long-term planning, precisely due to the existence of such irreversible moves. To further exacerbate the problem, the task of even determining whether a move is irreversible might be non-trivial. 

We train the $h$-potential on trajectories generated by a random policy, wherein we generate a random (solvable) map for each trajectory. Fig~\ref{fig:sokoban_ent} shows the evolution of $h$ with timesteps for a randomly sampled (validation) map. We find that $h$ increases sharply when a box is pushed against a wall, but remains constant as the agent moves about (potentially pushing a box around). Indeed, the latter is reversible whereas the former is not. Further, we confirm that $h$ does not necessarily increase along all trajectories, but only in expectation (Fig~\ref{fig:sokoban_pot}). We therefore learn that the $h$-potential can be used to extract useful information from the environment, all without any external supervision (via rewards) or specialized policies. 
\begin{wrapfigure}{r}{0.45\textwidth}
\vspace{-10pt}
\centering
  \includegraphics[width=0.45\textwidth]{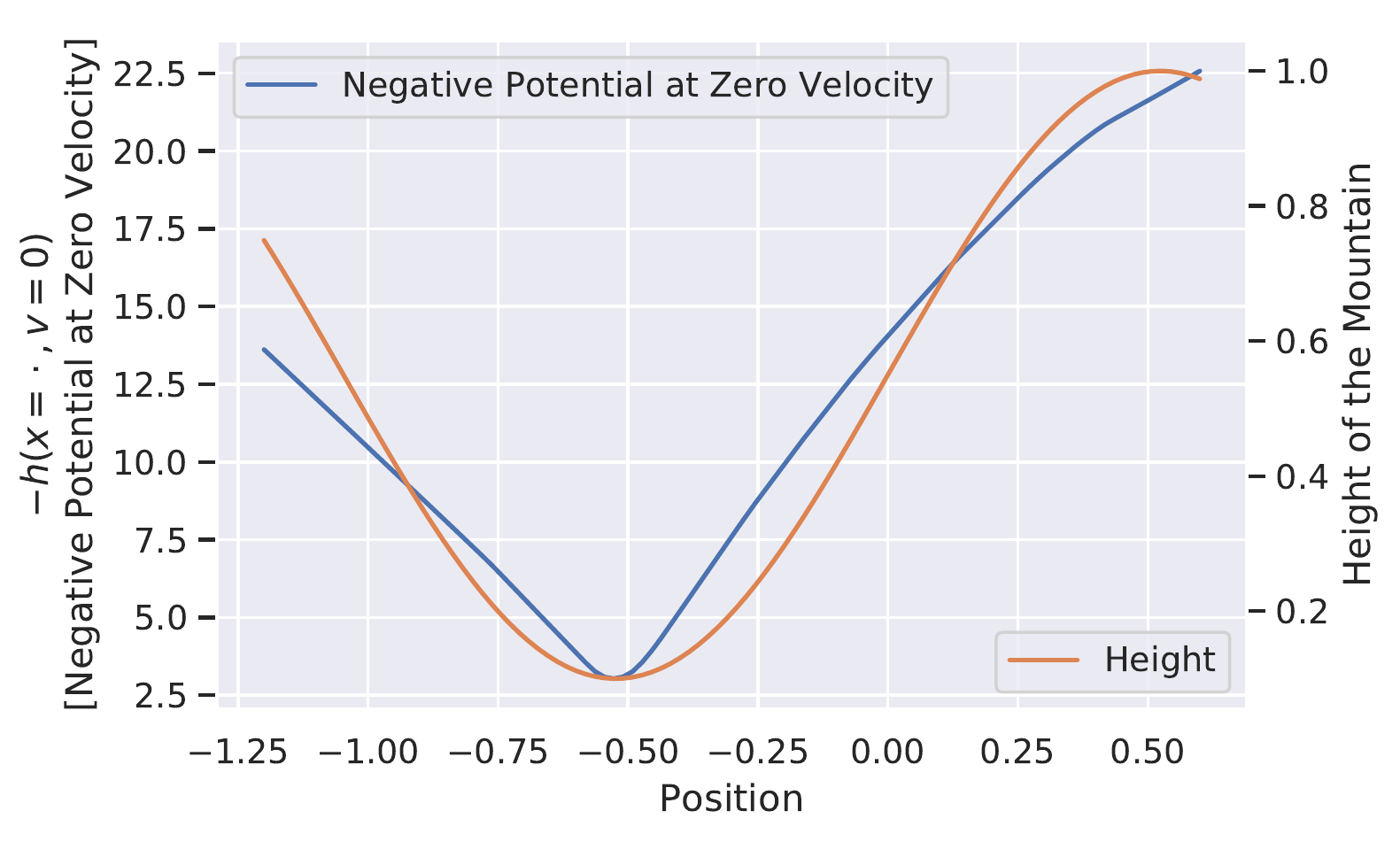}
\caption{\small The $h$-potential (for Mountain Car) at zero-velocity plotted against position. Also plotted (orange) is the height profile of the mountain. \textbf{Gist:} the $h$-potential approximately recovers the height-profile of the mountain with just trajectories from a random policy. \label{fig:mountcar_pot_v_mount}}
\vspace{-13pt}
\end{wrapfigure}

\textbf{Mountain-Car with Friction.\footnote{Experimental details and additional plots can be found in Appendix \ref{app:mountain_car}}} The environment considered shares its dynamics with the well known (continuous) Mountain-Car environment \citep{sutton2011reinforcement}, but with a crucial amendment: the car is subject to friction\footnote{Technically, this is achieved by subtracting a velocity dependent term from the acceleration.}. Friction is required to make the environment dissipative and thereby induce an arrow of time (cf. Section~\ref{sec:interpretation}). Moreover, we initialize the system in a uniform-randomly sampled state to avoid exploration issues. 

\begin{figure}
\centering
\begin{subfigure}{0.49\textwidth} 
\centering
\includegraphics[width=1.\textwidth]{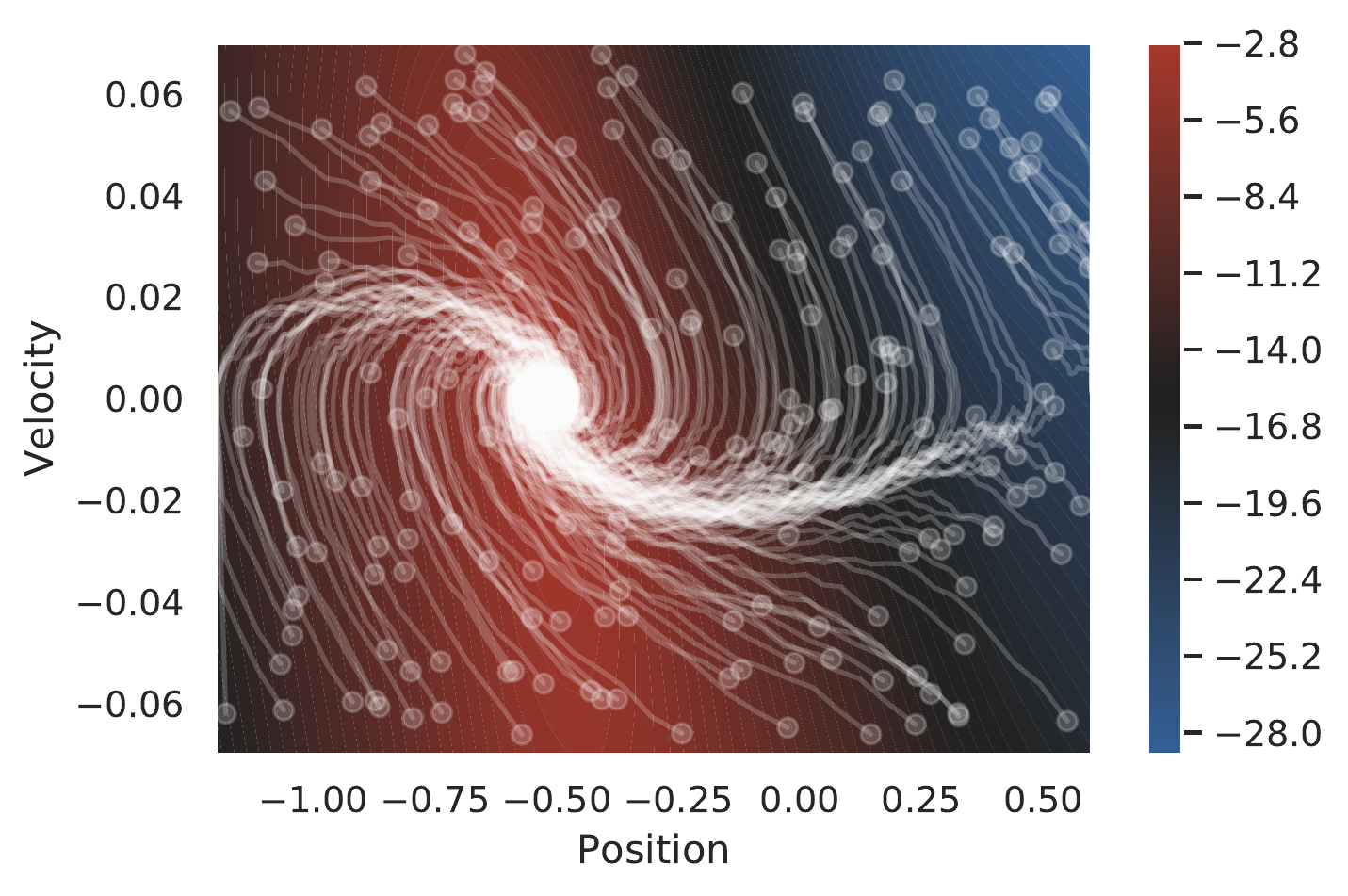}
\caption{\small With friction. \label{fig:mountcar_pot_filled}}
\end{subfigure}\hfill
\begin{subfigure}{0.49\textwidth}
\centering
  \includegraphics[width=1.\textwidth]{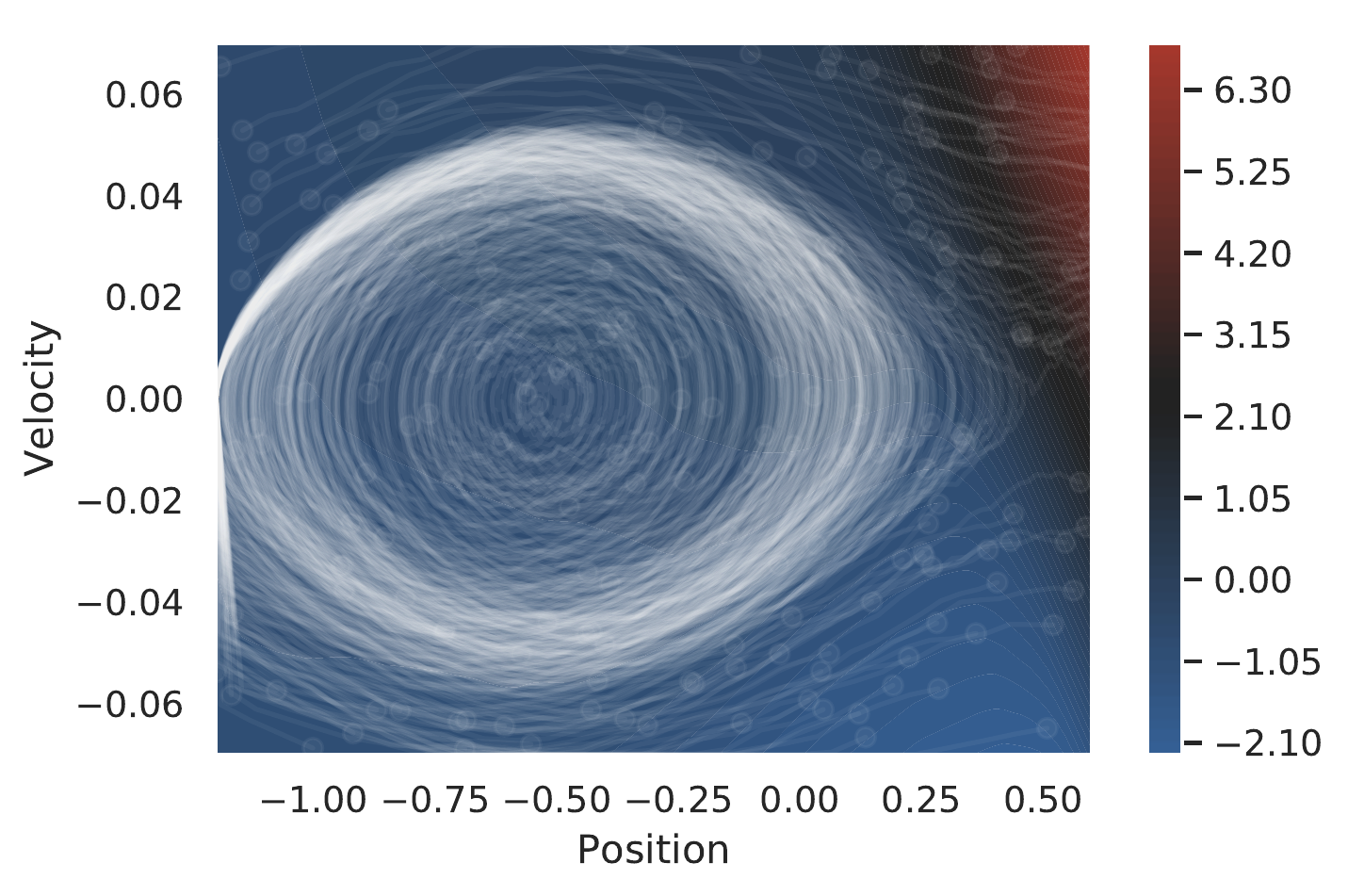}
\caption{\small Without friction. \label{fig:mountcar_pot_filled_nodamp}}
\end{subfigure}
\caption{\small The $h$-potential as a function of state (position and velocity) for (continuous) Mountain-Car with and without friction. The overlay shows random trajectories (emanating from the dots). \textbf{Gist:} with friction, we find that the state with largest $h$ is one where the car is stationary at the bottom of the valley. Without friction, there is no dissipation and the car oscillates up and down the valley. Consequently, we observe that the $h$-potential is constant (up-to edge effects) and thereby uninformative.}
\vspace{-15pt}
\end{figure}

Fig~\ref{fig:mountcar_pot_filled} shows the output of the $h$-potential trained with trajectory regularization overlayed with random trajectories, whereas Fig~\ref{fig:mountcar_pot_v_mount} plots the negative potential at zero-velocity together with the height of the mountain. We not only find that $h$ is at its maximum around the valley, but also that $-h$ at zero velocity largely recovers the terrain just from random trajectories. In addition, we also train the $h$-potential under identical conditions but without friction. The resulting environment is not dissipative, and in Fig~\ref{fig:mountcar_pot_filled_nodamp} we accordingly find that the corresponding $h$-potential is not informative (and an order of magnitude smaller), highlighting the practical importance of dissipation. 

Appendices \ref{app:tomatoworld} and \ref{app:damped_pendulum} present additional experiments. The former shows for a discrete environment that the $h$-potential can be used to derive a reward signal that correlates well with what one might engineer; in the latter, we learn the $h$-potential for an under-damped pendulum. 
\subsection{Connection to Stochastic Processes} \label{sec:ito_sdes}
In this section, we empirically study the link between our method and the theory of stochastic processes. Concretely: our goal is to investigate whether a learned arrow of time behaves \emph{as expected} by comparing it with a known notion of an arrow of time due to \citet*{jordan1998variational}. Experimental details are provided in Appendix~\ref{app:jko}. 

\begin{wrapfigure}{r}{0.48\textwidth}
\vspace{-10pt}
\centering
\includegraphics[width=0.48\textwidth]{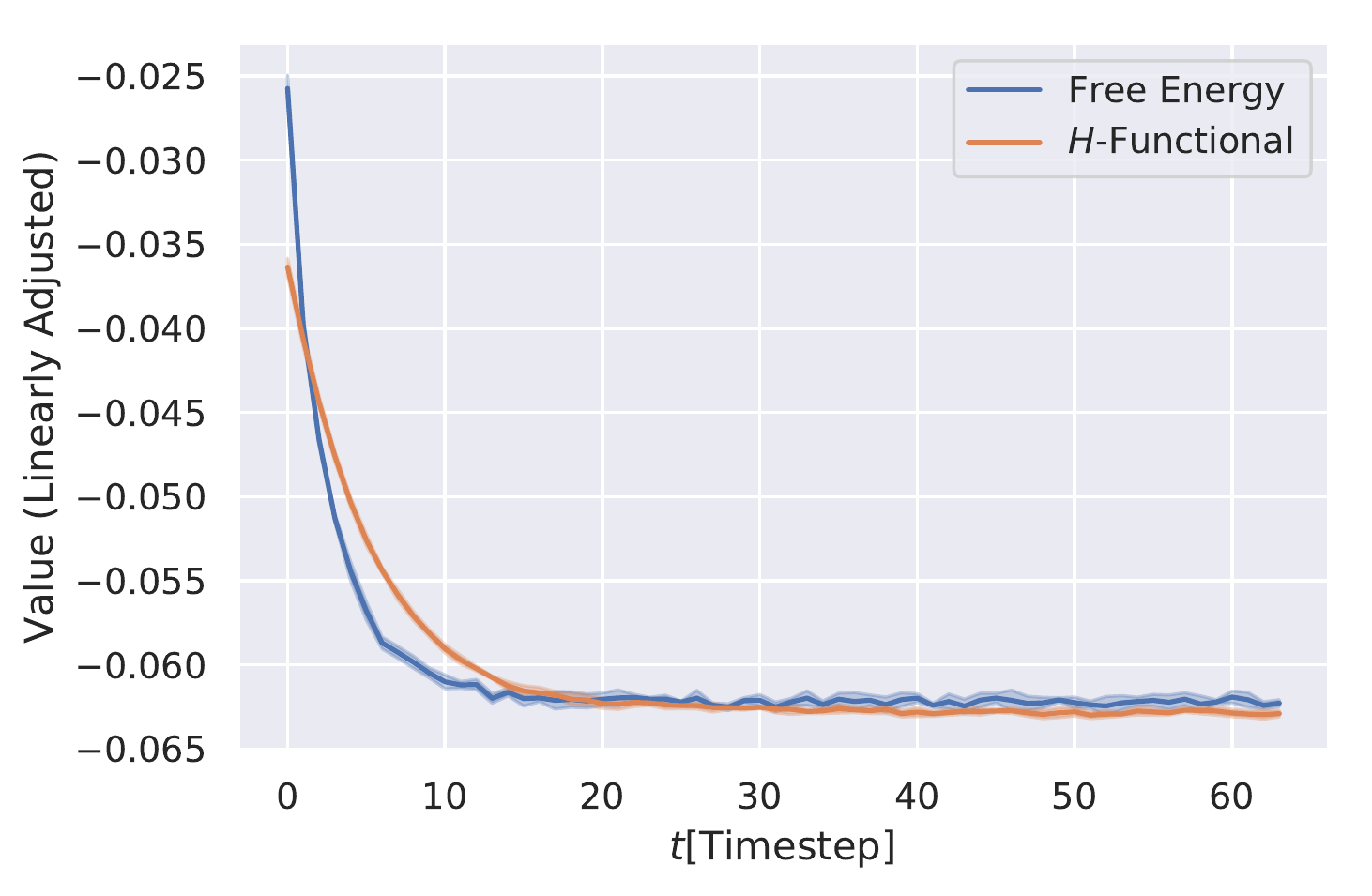}
\caption{\small The \emph{true} arrow of time (the Free-Energy functional, in blue) plotted against the learned arrow of time (the $H$-functional, plotted in orange) after linear scaling and shifting. We find the two to be in good (albeit not perfect) agreement.}\label{fig:rw_lyapunov}
\end{wrapfigure}
Following the notation of \citet{jordan1998variational}, we consider the spatial distribution $\rho(\vx, t)$ at time $t$ of a particle undergoing Brownian motion in the presence of a potential $\Psi$. The Ito stochastic differential equation (associated with the Fokker-Planck-Equation) is given by: 
\beq \label{eq:ito}
d\vX(t) = -\nabla \Psi(\vX(t)) dt + \sqrt{2 \beta^{-1}} d\vW(t)
\eeq
where $\vX(t)$ is the random variable corresponding to the distribution $\rho(\vx, t)$, the initial spatial distribution $\rho^0(\vx) = \rho(\vx, t=0)$ is fixed, $\beta$ is a parameter and $\vW$ is the standard Wiener process (or equivalently, $d\vW$ is uncorrelated white-noise). The celebrated Jordan-Kinderlehrer-Otto result \citep{jordan1998variational} shows that the dynamics of a distribution satisfying the Ito SDE (Eqn~\ref{eq:ito}) has the property that the following Free-Energy functional $F[\rho(\cdot, t)]$ can only decrease with time\footnote{In other words, $F[\rho(\cdot, t)]$ is a Lyapunov functional.}: 
\begin{align}
F[\rho(\cdot, t)] &= \underbrace{\int_{\bR^n} \Psi \rho(\cdot, t) \,\mathbf{dx}}_{E[\rho(\cdot, t)]} + \beta^{-1} \underbrace{\int_{\bR^n} \rho(\cdot, t) \log \rho(\cdot, t) \,\mathbf{dx}}_{-S[\rho(\cdot, t)]} = \E_{\vx \sim \rho(\cdot, t)} [\Psi + \beta^{-1} \log \rho(\cdot, t)]
\end{align}
where $E[\rho]$ is the energy functional and $S[\rho]$ is the (Gibbs-Boltzmann) entropy functional. It follows that the free-energy functional $F$ induces an arrow of time for the stochastic process generated by Eqn~\ref{eq:ito}.

Given that background, the question we now ask is the following: given just samples from the stochastic process $\vX(t)$, how well does a learned $h$-potential agree with the true Free-Energy functional? To answer that, we first define the $H$-functional as:
\beq
H[\rho(\cdot, t)] = -\E_{\vx \sim \rho(\cdot, t)} [h(\vx)]
\eeq
This allows us to compare $H[\rho(\cdot, t)]$ with $F[\rho(\cdot, t)]$, modulo a linear scaling and shift. To that end, we train $h$ with realizations of two-dimensional random walks under an elliptic paraboloid potential $\Psi$. Further, $\E_{\vx \sim \rho(\cdot, t)}[\Psi]$ is estimated via Monte-Carlo sampling, the differential entropy $\E_{\vx \sim \rho(\cdot, t)}[\log \rho(\cdot, t)]$ via a non-parametric estimator \citep{kozachenko1987sample, kraskov2004estimating, gao2015efficient}, and the linear transform coefficients for $H$ via linear regression. Fig~\ref{fig:rw_pot_filled} (in Appendix \ref{app:jko}) plots $h$ as a function of state $\vx \in \bR^2$, whereas Fig~\ref{fig:rw_lyapunov} shows that after appropriate (linear) scaling, the learned $H$ largely agrees with the true $F$. 
\vspace{-5pt}
\section{Conclusion}
\vspace{-5pt}
Over the course of the paper, we addressed the problem of learning the arrow of time in Markov (Decision) Processes. Having formulated an objective (Eqn~\ref{eq:objective_w_reg}) and analyzed the corresponding optimization problem for discrete state-spaces (Section~\ref{sec:theoretical_analysis} and Appendix~\ref{app:theoretical_analysis}), we laid out the fundamental challenges that arise -- namely the presence of \emph{demonic} policies and the requirement that the environment be \emph{dissipative} (Section~\ref{sec:interpretation}). Under appropriate assumptions, we discussed how the arrow of time can be used to measure reachability, detect side-effects and define a curiosity reward (Section~\ref{sec:applications}). Subsequently, we demonstrated the process of learning the arrow of time on a selection of discrete and continuous environments (Section~\ref{sec:experiments}). Finally, we showed for random walks that the learned arrow of time agrees well with the Free-Energy functional, which acts as the \emph{true} arrow of time. Future work could draw connections to algorithmic independence of cause and mechanism \citep{janzing2016algorithmic} and explore applications in causal inference \citep{janzing2010entropy, peters2017elements}. 

\section*{Acknowledgements}
The authors would like to acknowledge Min Lin for the initial discussions, Simon Ramstedt, Zaf Ahmed and Maximilian Puelma Touzel for their feedback on the draft.

\bibliographystyle{unsrtnat}
\bibliography{example_paper}
\clearpage
\newpage
\appendix
\begin{appendix}
\section{Theoretical Analysis} \label{app:theoretical_analysis}
In this section, we present a theoretical analysis of the optimization problem formulated in Eqn~\ref{eq:objective_w_reg}, and analytically evaluate the result for a few toy Markov processes to validate that the resulting solutions are indeed consistent with intuition. To simplify the exposition, we consider the discrete case where the state space $\cS$ of the MDP is finite. 

Consider a discrete Markov chain with enumerable states $s_i \in \cS$. At an arbitrary (but given) time-step $t$, we let $p^t_i = p(s_t = s_i)$ denote the probability that the Markov chain is in state $s_i$, and $\vp^t$ the corresponding vector (over states). With $T_{ij}$ we denote the probability of the Markov chain transitioning from state $s_i$ to $s_j$ under some policy $\pi$, i.e.\ $T_{ij} = p_{\pi}(s_{t + 1} = s_j | s_{t} = s_i)$. One has the transition rule: 
\beq \label{eq:markov_chain_transition}
\vp^{t + 1} = \vp^t T \qquad \vp^t = \vp^0 T^t
\eeq
where $T^t$ is the $t$-th matrix power of $T$. Now, we let $h_i$ denote the value $h_{\pi}$ takes at state $s_i$, i.e. $h_i = h_{\pi}(s_i)$, and the corresponding vector (over states) becomes $\vh$. This reduces the expectation of the function (now a vector) $\vh$ w.r.t any state distribution (now also a vector) $\vp$ to the scalar product $\vp \cdot \vh$. In matrix notation, the optimization problem in Eqn~\ref{eq:objective_w_reg} simplifies to: 

\beq \label{eq:objective_discrete}
\arg\max_{\vh} \frac{1}{N} \sum_{t = 0}^{N - 1}\left[\vp^t T \vh - \vp^t \cdot \vh \right] + \lambda \cT(\vh)
\eeq
For certain $\cT$, the discrete problem in Eqn~\ref{eq:objective_discrete} can be handled analytically. We consider two candidates for $\cT$, the first being the norm of $\vh$, and the second one being the norm of change in $h_i$, in expectation along a trajectory. 

\begin{proposition} \label{prop:l2_reg}
If $\cT(\vh) = -(2N)^{-1} \|\vh\|^2$, the solution to the optimization problem in Eqn~\ref{eq:objective_discrete} is given by: 
\beq \label{eq:discrete_solution}
\vh = \frac{\vp^0 T^N - \vp^0}{\lambda}
\eeq
\end{proposition}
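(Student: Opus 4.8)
### Proof Proposal

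The plan is to treat Eqn~\ref{eq:objective_discrete} as an unconstrained smooth concave maximization in the finite-dimensional vector $\vh \in \bR^{|\cS|}$ and simply set the gradient to zero. First I would rewrite the objective as a single quadratic form in $\vh$. The linear part is $\frac{1}{N}\sum_{t=0}^{N-1}\left[\vp^t T \vh - \vp^t\cdot\vh\right]$; since $\vp^{t+1} = \vp^t T$, the inner sum telescopes: $\sum_{t=0}^{N-1}(\vp^{t+1} - \vp^t) = \vp^N - \vp^0 = \vp^0 T^N - \vp^0$. So the linear term collapses to $\frac{1}{N}(\vp^0 T^N - \vp^0)\cdot\vh$. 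With $\cT(\vh) = -(2N)^{-1}\|\vh\|^2$, the full objective becomes $\frac{1}{N}(\vp^0 T^N - \vp^0)\cdot\vh - \frac{\lambda}{2N}\|\vh\|^2$.

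Next I would differentiate with respect to $\vh$. The gradient is $\frac{1}{N}(\vp^0 T^N - \vp^0) - \frac{\lambda}{N}\vh$, and setting this to zero gives $\vh = \frac{\vp^0 T^N - \vp^0}{\lambda}$, which is exactly Eqn~\ref{eq:discrete_solution}. To confirm this critical point is the maximizer, I would note the Hessian is $-\frac{\lambda}{N}I$, which is negative definite for $\lambda > 0$, so the objective is strictly concave and the stationary point is the unique global maximum.

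The only genuine subtlety — and the step I would be most careful about — is the telescoping of the linear term, which crucially relies on the transition identity $\vp^{t+1} = \vp^t T$ from Eqn~\ref{eq:markov_chain_transition}; without recognizing that $\vp^0 T^N - \vp^0$ is precisely $\vp^N - \vp^0$, the expression looks more complicated than it is. A secondary point worth a sentence is that the $1/N$ factors in both the objective and the regularizer cancel cleanly, so the normalization choice $(2N)^{-1}$ in $\cT$ is exactly what makes the solution independent of $N$ except through $T^N$. Everything else is routine: linearity of the scalar product, the standard gradient of a quadratic form, and the negative-definiteness check. I do not anticipate any real obstacle.
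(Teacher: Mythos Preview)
Your proposal is correct and follows essentially the same route as the paper: collapse the linear term via the telescoping identity $\vp^{t+1}=\vp^t T$, differentiate the resulting quadratic, and solve the first-order condition. The only differences are cosmetic---you telescope before differentiating rather than after, and you add the (welcome) Hessian check for strict concavity that the paper omits.
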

\begin{proof}
First, note that the objective in Eqn~\ref{eq:objective_discrete} becomes: 
\beq \label{eq:objective_l2}
\cL[\vh] = \frac{1}{N} \sum_{t = 0}^{N - 1}\left[\vp^t T \vh - \vp^t \cdot \vh \right] - \frac{1}{2N} \|\vh\|^2
\eeq
To solve the maximization problem, we must differentiate $\cL$ w.r.t. its argument $\vh$, and set the resulting expression to zero. This yields: 
\beq
\nabla_{\vh} \cL = \frac{1}{N} \left[\sum_{t=0}^{N - 1} (\vp^t T - \vp^t) - \lambda \vh \right] = 0
\eeq
Now, the summation (over $t$) is telescoping, and evaluates to $\vp^{N - 1} T - \vp^0$. Substituting $\vp^{N - 1}$ with the corresponding expression from Eqn~\ref{eq:markov_chain_transition} and solving for $\vh$, we obtain Eqn~\ref{eq:discrete_solution}. 
\end{proof}

Proposition~\ref{prop:l2_reg} has an interesting implication: if the Markov chain is initialized at equilibrium, i.e. if $\vp^0 = \vp^0 T$, we obtain that $\vh = \mathbf{0}$ identically. Given the above, we may now consider simple Markov chains to explore the implications of Eqn~\ref{eq:discrete_solution}.
\begin{example} \label{exp:two_var_mdp}
Consider a Markov chain with two states and reversible transitions, parameterized by $\alpha \in [0, 1]$ such that $T_{11} = T_{21} = 1 - \alpha$ and $T_{12} = T_{22} = \alpha$. If $\vp^0 = (\nicefrac{1}{2}, \nicefrac{1}{2})$, one obtains: 
\beq \label{eq:two_var_markov_l2}
\vh \propto (-\gamma, \gamma)
\eeq
where $\gamma = \alpha - \nicefrac{1}{2}$. To see how, consider that for all $N > 0$, one obtains $\vp^{0} T^N = (1 - \alpha, \alpha)$. Together with Proposition~\ref{prop:l2_reg}, Eqn~\ref{eq:two_var_markov_l2} follows. 
\end{example}

The above example illustrates two things. On the one hand, if $\alpha = \nicefrac{1}{2}$, one obtains a Markov chain with \emph{perfect reversibility}, i.e. the transition $s_1 \to s_2$ is equally as likely as the transition $s_2 \to s_1$. In this case, one indeed obtains $h(s_1) = h(s_2) = 0$, as mentioned above. On the other hand, if one sets $\alpha = 1$, the transition from $s_2 \to s_1$ is never sampled, and that from $s_1 \to s_2$ is irreversible; consequently, $h(s_2) - h(s_1)$ takes the largest value possible. Now, while this aligns well with our intuition, the following example exposes a weakness of the L2-norm-penalty used in Proposition~\ref{prop:l2_reg}. 

\begin{example} \label{exp:four_var_mdp}
Consider two Markov chains, both always initialized at $s_1$. For the first Markov chain, the dynamics admits the following transitions: $s_1 \to s_2 \to s_3 \to s_4$, whereas for the second chain, one has $s_1 \to s_3 \to s_2 \to s_4$. Now, for both chains and $N \ge 4$, it's easy to see that $(\vp^0 T^N)_i = 1$ if $i = 4$, but $0$ otherwise. From Eqn~\ref{eq:discrete_solution}, one obtains: 
\beq \label{eq:four_var_markov_l2}
\vh \propto (-1, 0, 0, 1)
\eeq
\end{example}
The solution for $h$ given by Eqn~\ref{eq:four_var_markov_l2} indeed increases (non-strictly) monotonously with timestep. However, we obtain $h(s_2) = h(s_3) = 0$ for both Markov chains. In particular, $h$ does not increase between the $s_2 \to s_3$ transition in the former and the $s_3 \to s_2$ transition in the latter, even though both transitions are irreversible. It is in general apparent from \ref{prop:l2_reg} that the solution for $h$ depends only on the initial and final state distribution, and not the intermediate trajectory. 

Now, consider the following regularizer that penalizes not just the function norm, but the change in $h$ in expectation along trajectories: 
\beq \label{eq:traj_reg}
\cT(\vh) = -\frac{1}{2N} \sum_{t=0}^{N - 1} (\vp^t T \vh - \vp^t \cdot \vh)^2 - \frac{\omega}{2N} \|\vh\|^2
\eeq 
where $\omega$ is the relative weight of the L2 regularizer. This leads to the result:
\begin{proposition}
The solution to the optimization problem in Eqn~\ref{eq:objective_discrete} with the regularizer in Eqn~\ref{eq:traj_reg} is the solution to the following matrix-equation: 
\beq \label{eq:discrete_solution_traj_reg}
\sum_{t=0}^{N-1} \vp^0(T^{t + 1} - T^t)\vh \, \vp^0 (T^{t + 1} - T^t) + \omega \vh = \frac{\vp^0 T^N - \vp^0}{2\lambda}
\eeq
\end{proposition}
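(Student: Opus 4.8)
The plan is to follow the proof of Proposition~\ref{prop:l2_reg} almost verbatim, the only new ingredient being the extra quadratic term in the regularizer~\eqref{eq:traj_reg}. To keep the algebra light I would first abbreviate $g_t := \vp^0(T^{t+1}-T^t)$ for the row vector appearing in the $t$-th summand, so that, using $\vp^t=\vp^0 T^t$, each scalar $\vp^t T\vh-\vp^t\cdot\vh$ is just $g_t\vh$. The objective then reads $\cL[\vh]=\tfrac1N\sum_{t=0}^{N-1} g_t\vh-\tfrac{\lambda}{2N}\sum_{t=0}^{N-1}(g_t\vh)^2-\tfrac{\lambda\omega}{2N}\|\vh\|^2$, which is a concave quadratic form in $\vh$ (strictly concave as soon as $\omega>0$), so any stationary point is the global maximizer and it suffices to set $\nabla_{\vh}\cL=0$.

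Then I would differentiate term by term. The linear term contributes $\tfrac1N\sum_t g_t^\top$, and since $g_t=\vp^{t+1}-\vp^t$ this sum telescopes to $\tfrac1N(\vp^0 T^N-\vp^0)^\top$, exactly as in Proposition~\ref{prop:l2_reg}. The new piece is the quadratic term: differentiating $(g_t\vh)^2$ gives $2(g_t\vh)\,g_t^\top$, and the key observation is that $(g_t\vh)\,g_t^\top$ is nothing but the rank-one operator $g_t^\top g_t$ applied to $\vh$, so that summing over $t$ produces the operator $\bigl(\sum_t g_t^\top g_t\bigr)\vh$ — precisely the term $\sum_{t}\vp^0(T^{t+1}-T^t)\vh\,\vp^0(T^{t+1}-T^t)$ of Eqn~\eqref{eq:discrete_solution_traj_reg}. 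The $L_2$ term contributes $-\tfrac{\lambda\omega}{N}\vh$. Setting the gradient to zero, clearing the common $1/N$, dividing through by the constant in front of the telescoped term, and moving that term to the right-hand side yields the stated matrix equation.

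I do not expect a genuine obstacle: this is a routine extension of Proposition~\ref{prop:l2_reg}. The point that actually carries content — and the one to get right — is recognizing that the quadratic penalty, which at first glance looks nonlinear in $\vh$, reassembles into a single linear operator $\sum_t g_t^\top g_t$ acting on $\vh$, so that the optimality condition is genuinely one linear matrix equation; the remaining care is purely bookkeeping of the constants $N,\lambda,\omega$ and the factor $2$ coming from differentiating $(g_t\vh)^2$. If one additionally wants uniqueness of the maximizer, one should note that $\cL$ is strictly concave exactly when $\omega>0$, in which case $\sum_t g_t^\top g_t+\omega I$ is positive definite and Eqn~\eqref{eq:discrete_solution_traj_reg} can be solved in closed form by inverting it. Finally, it is worth flagging the conceptual upshot (the reason for introducing this regularizer at all): unlike Eqn~\eqref{eq:discrete_solution}, the solution now depends on the whole sequence of marginals $\vp^0 T^t$, not merely on $\vp^0$ and $\vp^0 T^N$.
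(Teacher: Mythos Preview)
Your proposal is correct and follows essentially the same route as the paper: write out the full objective with the regularizer of Eqn~\eqref{eq:traj_reg}, differentiate term by term, telescope the linear part to $\vp^0 T^N-\vp^0$, differentiate each $(g_t\vh)^2$ to $2(g_t\vh)\,g_t$, and rearrange. Your remarks on concavity and on uniqueness of the maximizer when $\omega>0$ (via positive definiteness of $\sum_t g_t^\top g_t+\omega I$) are a welcome addition that the paper omits.
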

\begin{proof}
Analogous to Eqn~\ref{eq:objective_l2}, we may write the objective in Eqn~\ref{eq:objective_discrete} as (by substituting Eqn~\ref{eq:traj_reg} in Eqn~\ref{eq:objective_discrete}):
\beq \label{eq:objective_traj_reg}
\cL[\vh] = \frac{1}{N} \sum_{t = 0}^{N - 1}\left[\vp^t T \vh - \vp^t \cdot \vh \right] - \frac{\lambda}{2N} \sum_{t=0}^{N - 1} (\vp^t T \vh - \vp^t \cdot \vh)^2 - \frac{\lambda \omega}{2N} \|\vh\|^2
\eeq
Like in Proposition~\ref{prop:l2_reg}, we maximize it by setting the gradient of $\cL$ w.r.t. $\vh$ to zero. This yields: 
\beq \label{eq:grad_obj_traj_reg}
\nabla_{\vh} \cL = \frac{1}{N} \left[\sum_{t=0}^{N - 1} (\vp^t T - \vp^t) - \frac{\lambda}{2} \nabla_{\vh} \sum_{t=0}^{N - 1} (\vp^t T \vh - \vp^t \cdot \vh)^2 - \omega\lambda \vh \right] = 0
\eeq
The first term in the RHS is again a telescoping sum; it evaluates to: $\vp^0 T^N - \vp^0$ (cf. proof of Proposition~\ref{prop:l2_reg}). The second term can be expressed as (with $I$ as the identity matrix): 
\begin{align}
\frac{\lambda}{2} \nabla_{\vh} \sum_{t=0}^{N - 1} (\vp^t T \vh - \vp^t \cdot \vh)^2 &= 
\frac{\lambda}{2} \sum_{t=0}^{N - 1} \nabla_{\vh} (\vp^t (T - I) \vh)^2 \\
&= \lambda \sum_{t=0}^{N - 1} (\vp^t (T - I) \vh)(\vp^t (T - I)) \\
&= \lambda \sum_{t=0}^{N-1} \vp^0(T^{t + 1} - T^t)\vh \, \vp^0 (T^{t + 1} - T^t)
\end{align}
where the last equality follows from Eqn~\ref{eq:markov_chain_transition}. Substituting the above in Eqn~\ref{eq:grad_obj_traj_reg} and rearranging terms yields Eqn~\ref{eq:discrete_solution_traj_reg}. 
\end{proof}
While Eqn~\ref{eq:discrete_solution_traj_reg} does not yield an explicit expression for $\vh$, it is sufficient for analysing individual cases considered in Examples \ref{exp:two_var_mdp} and \ref{exp:four_var_mdp}. 

\begin{example} \label{exp:two_var_mdp_tr}
Consider the two-state Markov chain in Example~\ref{exp:two_var_mdp} and the associated transition matrix $T$ and initial state distribution $\vp^0 = (\nicefrac{1}{2}, \nicefrac{1}{2})$. Using the regularization scheme in Eqn~\ref{eq:traj_reg} and the associated solution Eqn~\ref{eq:discrete_solution_traj_reg}, one obtains: 
\beq
\vh = (-\tilde \gamma, \tilde \gamma)
\eeq
where:
\beq \label{eq:two_var_markov_traj_reg}
\tilde \gamma = \frac{2\alpha - 1}{\lambda (4 \alpha^2 - 4 \alpha + 2\omega + 1)}
\eeq
To obtain this result\footnote{Interested readers may refer to the attached SymPy computation.}, we use that $T^t = T$ for all $t \ge 1$ and truncate the sum without loss of generality at $N = 1$. 
\end{example}

Like in Example~\ref{exp:two_var_mdp}, we observe $h(s_1) = h(s_2) = 0$ if $\alpha = \nicefrac{1}{2}$ for all $\omega > 0$ (i.e. at equilibrium). In addition, if $\omega \ge \nicefrac{1}{2}$, it can be shown that $h(s_2) - h(s_1)$ increases monotonously with $\alpha$ and takes the largest possible value at $\alpha = 1$. We therefore find that for the simple two-state Markov chain of Example~\ref{exp:two_var_mdp}, the regularization in Eqn~\ref{eq:traj_reg} indeed leads to intuitive behaviour for the respective solution $\vh$. Now:

\begin{example} \label{exp:four_var_mdp_tr}
Consider the four-state Markov chain with transitions $s_1 \to s_2 \to s_3 \to s_4$ and the corresponding transition matrix $T$, where $T_{12} = T_{23} = T_{34} = T_{44} = 1$, $T_{ij} = 0$ for all other $i, j$. Set $\vp^0 = (1, 0, 0, 0)$, i.e. the chain is always initialized at $s_1$. Now, the summation over $t$ in Eqn~\ref{eq:discrete_solution_traj_reg} can be truncated w.l.o.g when $N = 4$, given that $T^{t + 1} = T^t$ for all $t \ge 3$. At $\omega = 0$, one solution is: 
\beq
\vh \propto (-\nicefrac{3}{2}, -\nicefrac{1}{2}, \nicefrac{1}{2}, \nicefrac{3}{2})
\eeq
\end{example}
Further, for all finite $\omega$, one obtains $h(s_1) < h(s_2) < h(s_3) < h(s_4)$, where the inequality is strict. This is unlike Eqn~\ref{eq:four_var_markov_l2} where $h(s_2) = h(s_3)$, and consistent with the intuitive expectation that the arrow of time must increase along irreversible transitions. 

In conclusion: we find that the functional objective defined in Eqn~\ref{eq:objective_w_reg} may indeed lead to analytical solutions that are consistent with the notion of an arrow of time in certain toy Markov chains. However, in most interesting real world environments, the transition model $T$ is not known and or or the number of states is infeasibly large, rendering an analytic solution intractable. In such cases, as we see in Section~\ref{sec:experiments}, it is possible to parameterize $h$ as a neural network and train the resulting model with stochastic gradient descent to optimize the functional objective defined in Eqn~\ref{eq:objective_w_reg}. 

\section{Algorithm} \label{app:algorithm}

\begin{algorithm}[H]
\caption{Training the $h$-Potential} \label{alg:main}
\begin{algorithmic} [1]
\REQUIRE Environment \texttt{Env}, random policy $\pi_{\sharp}$, trajectory buffer \texttt{B}
\REQUIRE Model $h^{\theta}$, regularizer $\cT$, optimizer.
\FOR{$k = 1...M$}
\STATE $\texttt{B[k, :]} \gets (s_0, ..., s_{N}) \sim \texttt{Env}[\pi_{\sharp}]$ \COMMENT{Sample a trajectory of length $N$ with the random policy and write to $k$-th position in the buffer.}
\ENDFOR
\LOOP 
\STATE Sample trajectory index $k \sim \{1, ..., M\}$ and time-step $t \sim \{0, ..., N-1\}$. \COMMENT{In general, one may sample multiple $k$'s and $t$'s for a larger mini-batch.}
\STATE Fetch states $s_t \gets \texttt{B[k, t]}$ and $s_{t + 1} \gets \texttt{B[k, t + 1]}$ from buffer. 
\STATE Compute loss as $L(\theta) = -[h^{\theta}(s_{t + 1}) - h^{\theta}(s_{t})]$. 
\IF{using trajectory regularizer}
\STATE Compute regularizer term as $[h^{\theta}(s_{t + 1}) - h^{\theta}(s_t)]^2$ and add to $L(\theta)$. 
\ELSE
\STATE Apply the regularizer as required. If early-stopping, break out of the loop if necessary. 
\ENDIF
\STATE Compute parameter gradients $\nabla_{\theta}L(\theta)$ and update parameters with the optimizer. 
\ENDLOOP
\end{algorithmic}
\end{algorithm}

\section{Experimental Details} \label{app:experiments}
All experiments were run on a workstation with 40 cores, 256 GB RAM and 2 nVidia GTX 1080Ti.
\subsection{Discrete Environments}
\subsubsection{2D World with Vases} \label{app:vaseworld}
\begin{figure}
\centering
\begin{subfigure}{0.5\textwidth}
\centering
  \includegraphics[width=1.\textwidth]{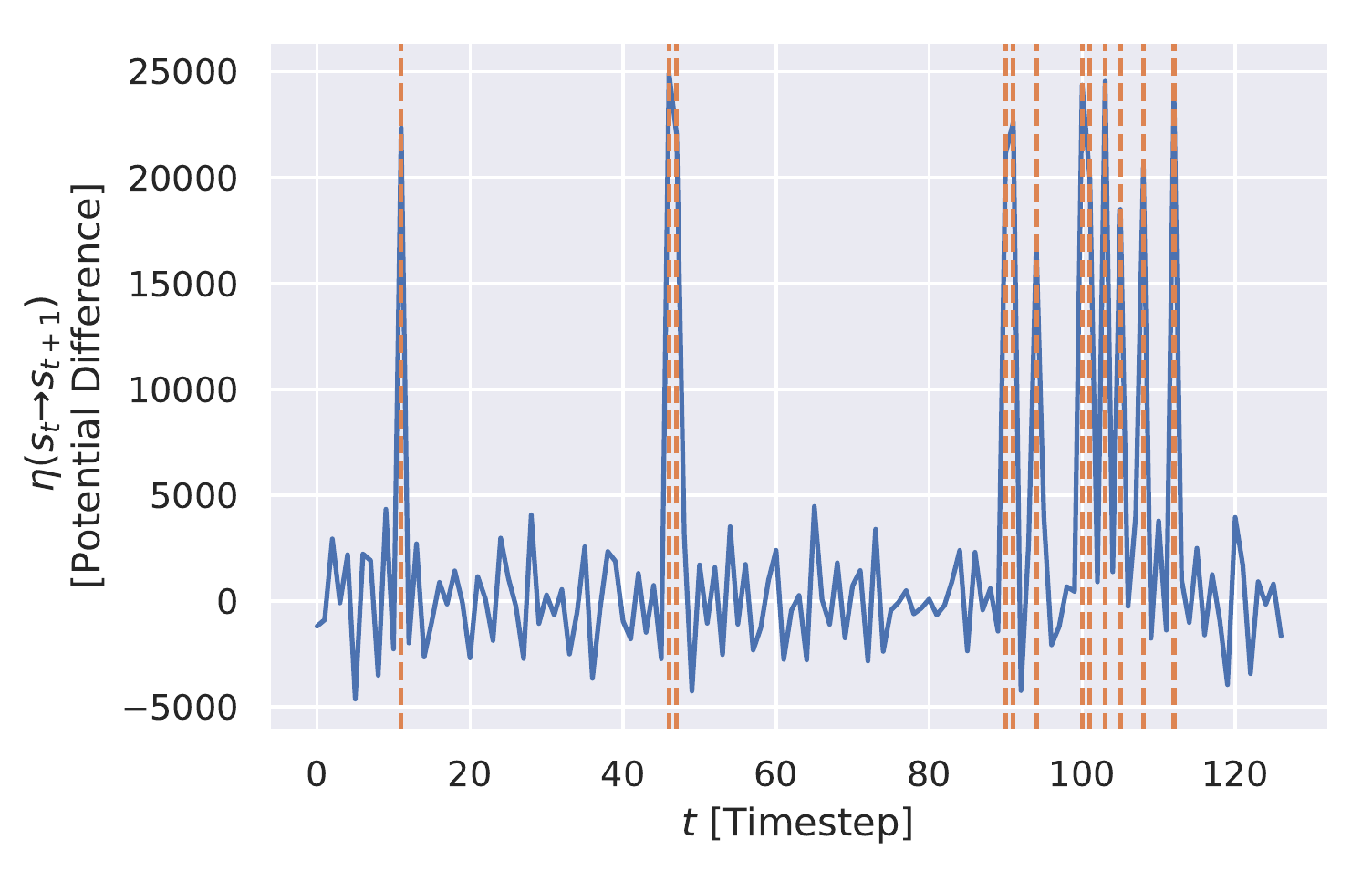}
\caption{\small TV-Noise \label{fig:vaseworld_dpot_tvnoise}}
\end{subfigure}\hfill
\begin{subfigure}{0.5\textwidth} 
\centering
\includegraphics[width=1.\textwidth]{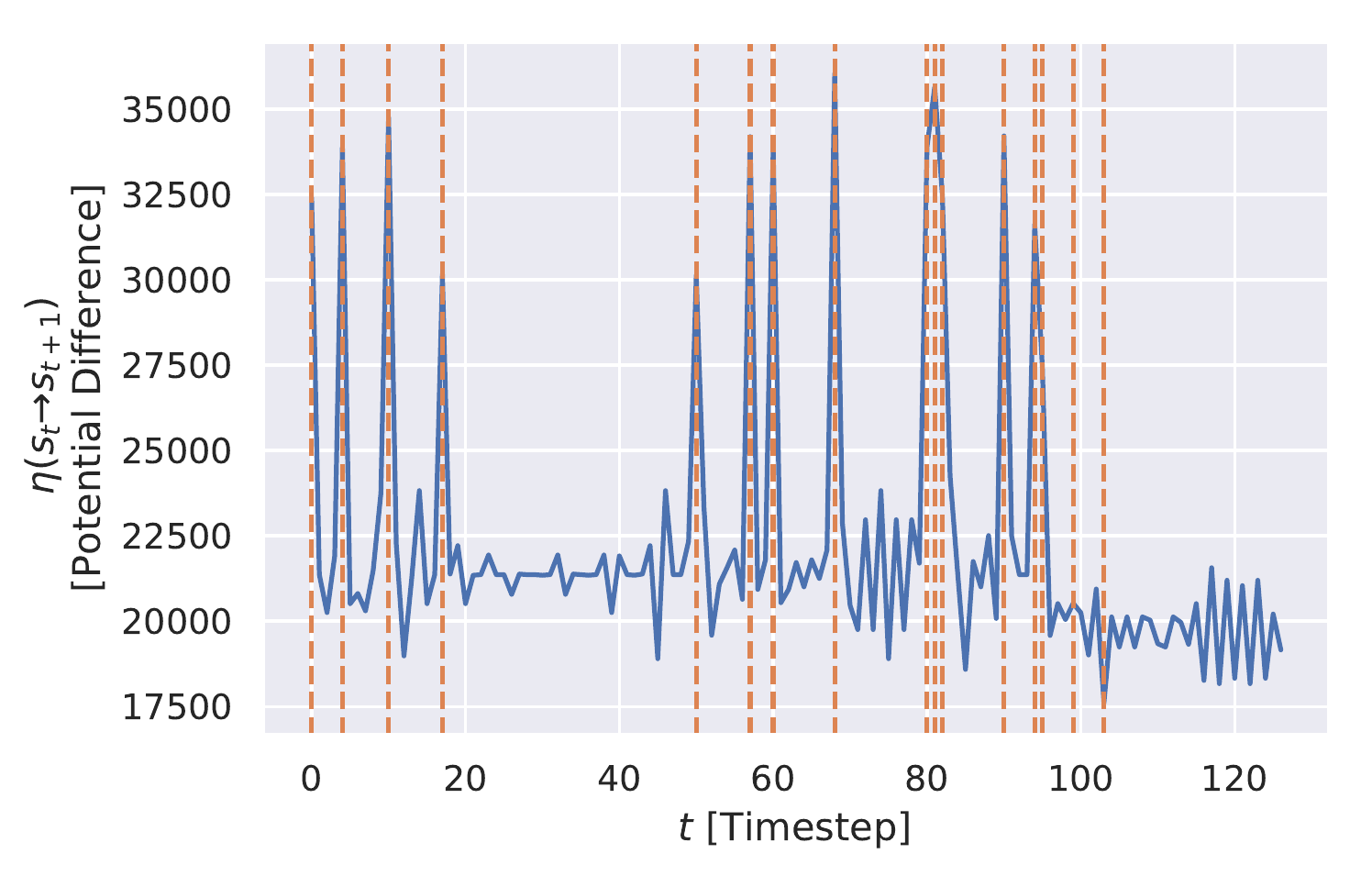}
\caption{\small Causal Noise \label{fig:vaseworld_dpot_causalnoise}}
\end{subfigure}
\caption{\small The potential difference $\eta$ plotted along trajectories, where the state-space is augmented with temporally uncorrelated (TV-) and correlated (causal) noise. The dashed vertical lines indicate time-steps where a vase is broken. \textbf{Gist:} while our method is fairly robust to TV-noise, it might get distracted by causal noise. \label{fig:vaseworld_dpots_noise}}
\end{figure}
\begin{wrapfigure}{r}{0.48\textwidth}
\centering
\includegraphics[width=0.48\textwidth]{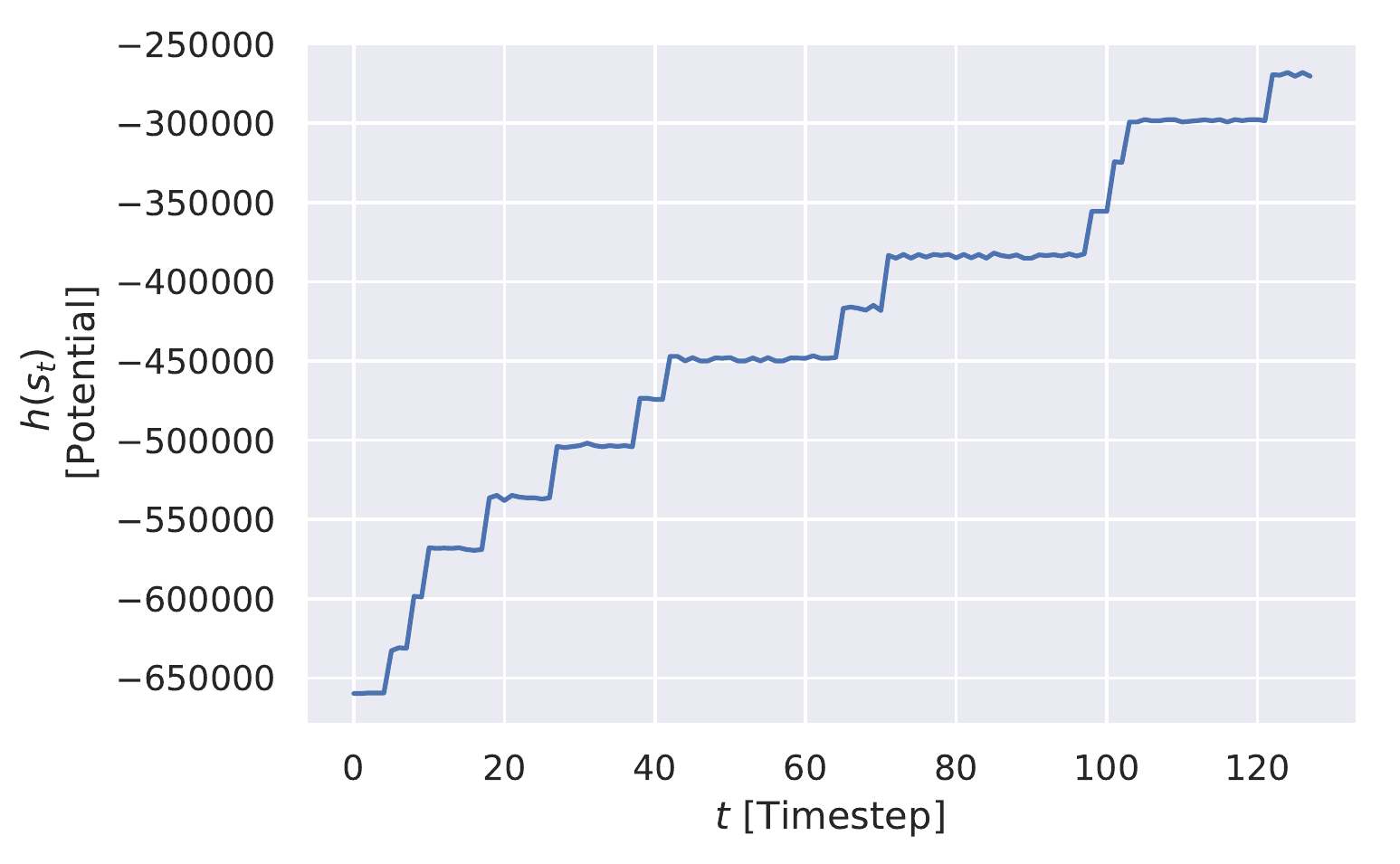}
\caption{\small The $h$-potential along a trajectory sampled from a random policy. \textbf{Gist:} The $h$-potential increases step-wise along the trajectory every time an agent (irreversibly) breaks a vase. It remains constant as the agent (reversibly) moves about.}\label{fig:vaseworld_pot_nonoise}
\vspace{-10pt}
\end{wrapfigure}
The environment state comprises three $7 \times 7$ binary images (corresponding to agent, vases and goal), and the vases appear in a different arrangement every time the environment is reset. The probability of sampling a vase at any given position is set to \nicefrac{1}{2}. 

We use a two-layer deep and 256-unit wide ReLU network to parameterize the $h$-potential. It is trained on 4096 trajectories of length 128 for 10000 iterations of stochastic gradient descent with Adam optimizer (learning rate: 0.0001). The batch-size is set to 128, and we use a weight decay of 0.005 to regularize the model. We use a validation trajectory to generate the plots in Fig~\ref{fig:vaseworld_pot_nonoise} and \ref{fig:vaseworld_dpot_nonoise}. Moreover, Fig~\ref{fig:vaseworld_pothist_nonoise} shows histograms of the values taken by $h$ at various time-steps along the trajectory. We learn that $h$ takes on larger values (on average) as $t$ increases. 

To test the robustness of our method, we conduct experiments where the environment state is augmented with one of: (a) a $7 \times 7$ image with uniform-randomly sampled pixel values (\emph{TV-noise}) and (b) a $7 \times 7$ image where every pixel takes the value $\nicefrac{t}{128}$, where $t$ is the time-step\footnote{Recall that the trajectory length is set to $128$.} of the corresponding state (\emph{Causal Noise}). Fig~\ref{fig:vaseworld_dpot_tvnoise} and \ref{fig:vaseworld_dpot_causalnoise} plot the corresponding $\eta = \Delta h$ along randomly sampled trajectories. 

Given a learned arrow of time, we now present an experiment where we use it to derive a safe-exploration penalty (in addition to the environment reward). To that end, we now consider the situation where the agent's policy is not random, but specialized to reach the goal state (from its current state). For both the baseline and the safe agents, every action is rewarded with the change in Manhattan norm of the agent's position to that of the goal -- i.e. an action that moves the agent closer to the goal is rewarded $+1$, one that moves it farther away from the goal is penalized $-1$, and one that keeps the distance unchanged is neither penalized nor rewarded ($0$). Further, every step is penalized by $-0.1$ (so as to keep the trajectories short), and exceeding the available time limit ($30$ steps) incurs a termination penalty ($-10$). In addition, the reward function of the safe agent is augmented with the reachability, i.e. it takes the form described in Eqn~\ref{eq:safe_reward}. We use $\beta=4$ and a transfer function $\sigma$ such that $\sigma(\eta) = 0$ if $\eta < 5000$ (cf. Fig~\ref{fig:vaseworld_dpot_nonoise}), and $1$ otherwise. 

The policy is parameterized by a 3-layer deep 256-unit wide (fully connected) ReLU network and trained via Duelling Double Deep Q-Learning\footnote{We adapt the implementation due to \citet{deeprl}.} \citep{van2016deep, wang2015dueling}. The discount factor is set to $0.99$ and the target network is updated once every 200 iterations. For exploration, we use a $1 - \epsilon$ greedy policy, where $\epsilon$ is decayed linearly from $1$ to $0.1$ in the span of the first $10000$ iterations. The replay buffer stores $10000$ experiences and the batch-size used is $10$. Fig~\ref{fig:vaseworld_rl_goal} shows the probability of reaching the goal (in an episode of 30 steps) over the iterations (sample size $100$), whereas Fig~\ref{fig:vaseworld_rl_vase} shows the expected number of vases broken per episode (over the same 100 episodes). Both curves are smoothed by a Savitzky-Golay filter \citep{savitzky1964smoothing} of order $3$ and window-size $53$ (the original, unsmoothed curves are shaded). As expected, we find that using the safety penalty does indeed result in fewer vases broken, but also makes the task of reaching the goal difficult (we do not ensure that the goal is reachable without breaking vases).  
\begin{wrapfigure}{r}{0.48\textwidth}
\centering
\includegraphics[width=0.48\textwidth]{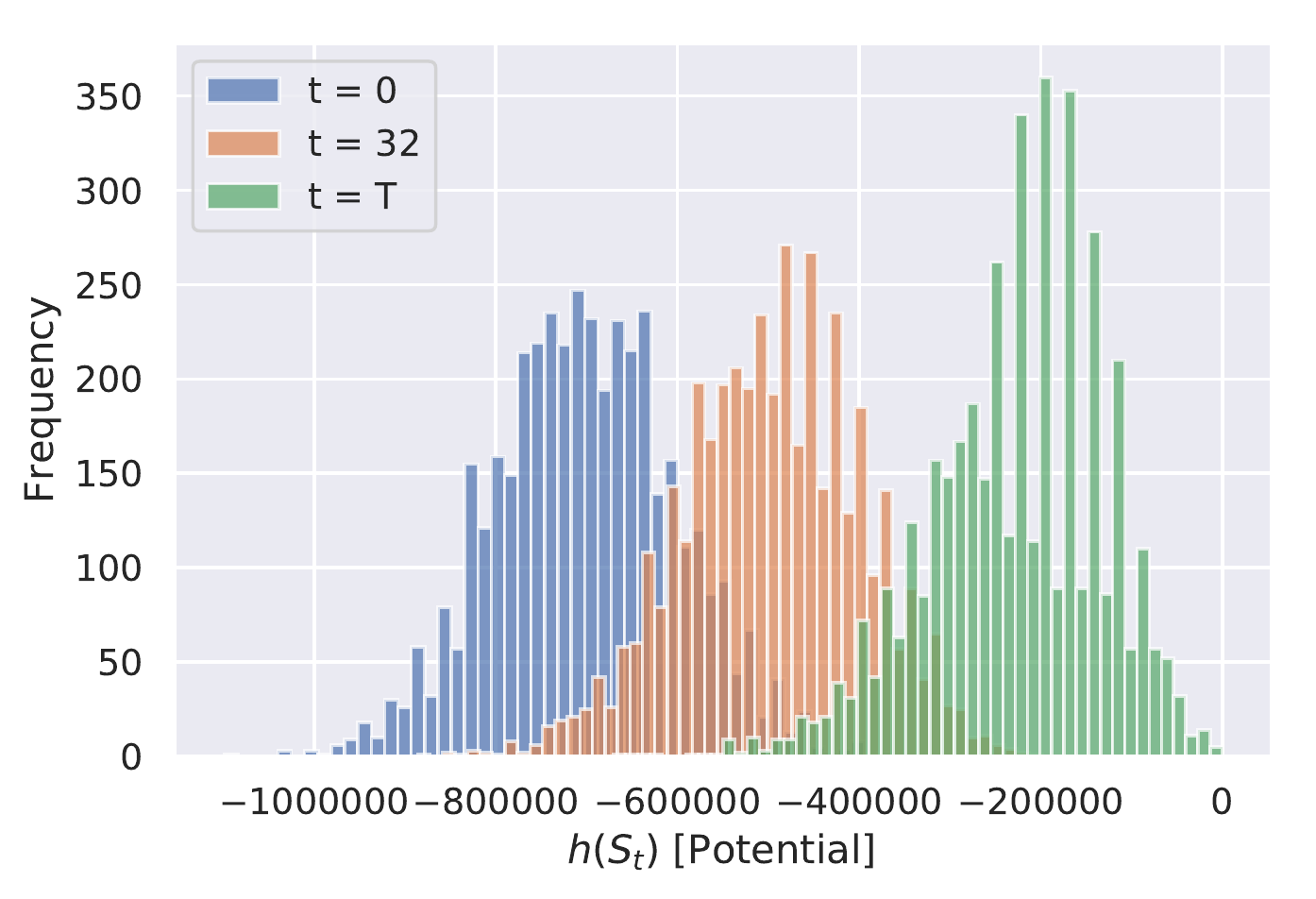}
\caption{\small Histogram (over trajectories) of values taken by $h$ at time-steps $t = 0$, $t = 32$ and $t = T = 128$.}\label{fig:vaseworld_pothist_nonoise}
\vspace{-20pt}
\end{wrapfigure}

\begin{figure}
\centering
\begin{subfigure}{0.5\textwidth}
\centering
  \includegraphics[width=1.\textwidth]{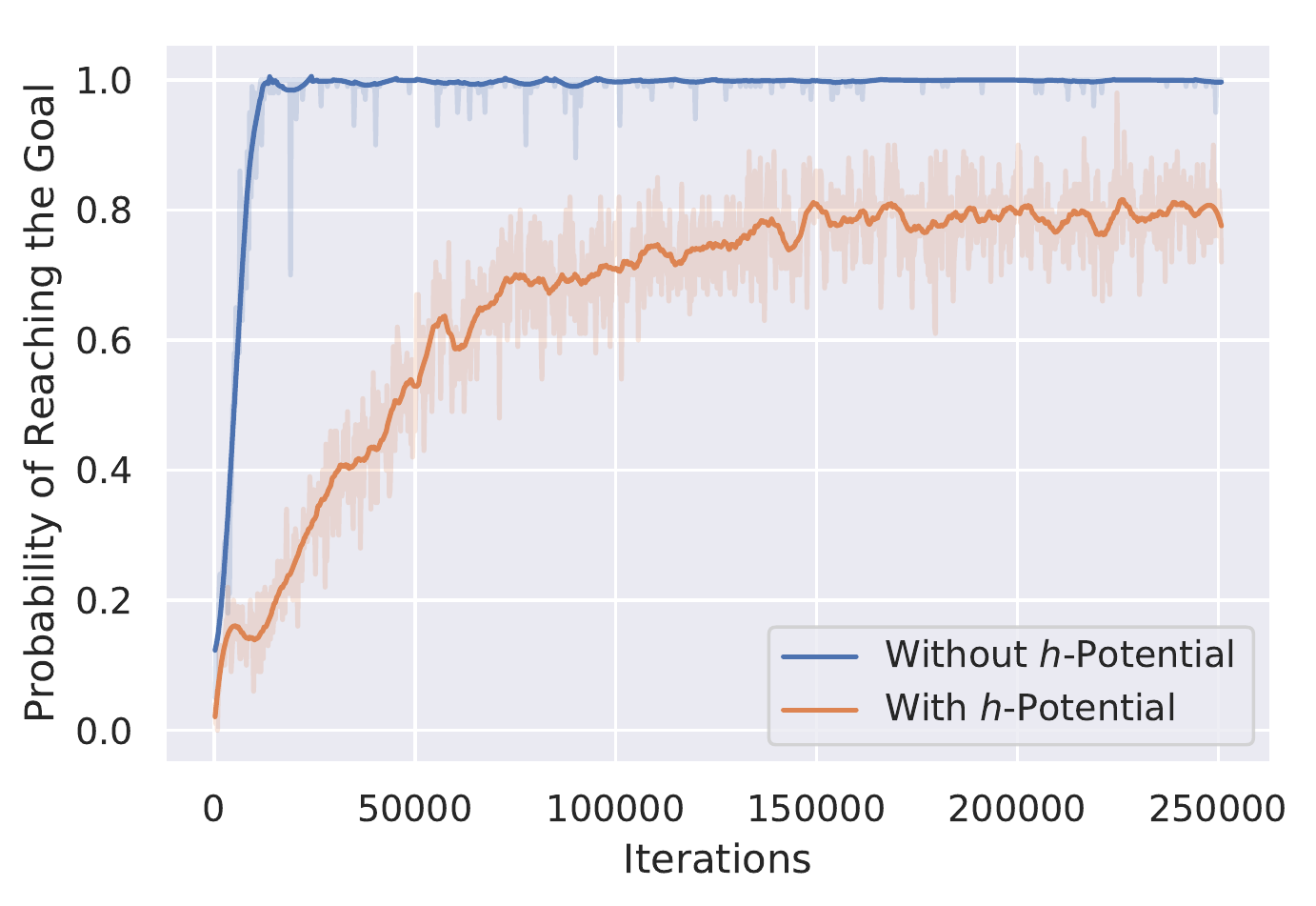}
\caption{\small Probability of reaching the goal. \label{fig:vaseworld_rl_goal}}
\end{subfigure}\hfill
\begin{subfigure}{0.5\textwidth} 
\centering
\includegraphics[width=1.\textwidth]{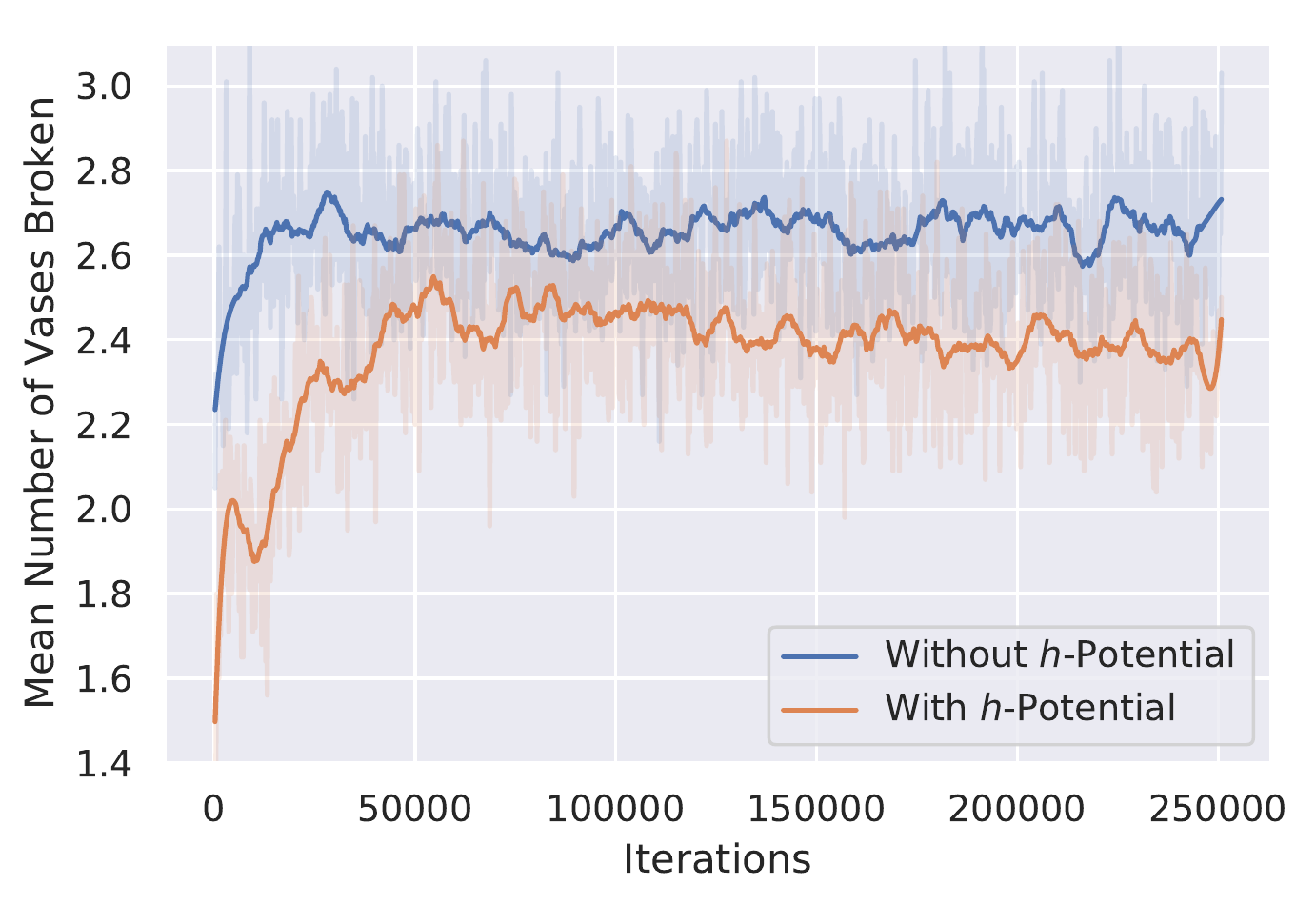}
\caption{\small Number of vases broken. \label{fig:vaseworld_rl_vase}}
\end{subfigure}
\caption{\small Probability of reaching the goal and the expected number of vases broken, obtained over $100$ evaluation episodes (per step). \textbf{Gist:} while the safety Lagrangian results in fewer vases broken, the probability of reaching the goal state is compromised. This trade-off between safety and efficiency is expected (cf. \citet{moldovan2012safe}). \label{fig:vaseworld_rl}}
\vspace{-15pt}
\end{figure}

\subsubsection{2D World with Drying Tomatoes} \label{app:tomatoworld}

The environment considered comprises a $7 \times 7$ 2D world where each cell is initially occupied by watered tomato plant\footnote{We draw inspiration from the tomato-watering environment described in \citet{leike2017ai}.}. The agent waters the cell it occupies, restoring the moisture level of the plant in the said cell to $100\%$. However, for each step the agent does not water a plant, it loses some moisture (by $2\%$ of maximum in our experiments). If a plant loses all moisture, it is considered dead and no amount of watering can resurrect it. The state-space comprises two $7 \times 7$ images: the first image is an indicator of the agent's position, whereas the pixel values of the second image quantifies the amount of moisture held by the plant\footnote{This is a strong causal signal which may distract the model. We include it nonetheless to make the task more challenging.} at the corresponding location.

We show that it is possible to recover an intrinsic reward signal that coincides well with one that one might engineer. To that end, we parameterize the $h$-potential as a two-layer deep 256-unit wide ReLU network and train it on 4096 trajectories (generated by a random policy) of length 128 for 10000 iterations of Adam (learning rate: 0.0001). The batch-size is set to 128 and the model is regularized with the trajectory regularizer ($\lambda = 0.5$). 

Unsurprisingly, we find that $h$ increases as the plants lose moisture. But conversely, when the agent waters a plant, it causes the $h$-potential to decrease by an amount that strongly correlates with the amount of moisture the watered plant gains. This can be used to define a \emph{dense} reward signal for the agent: 
\begin{wrapfigure}{r}{0.5\textwidth}
\centering
\includegraphics[width=0.5\textwidth]{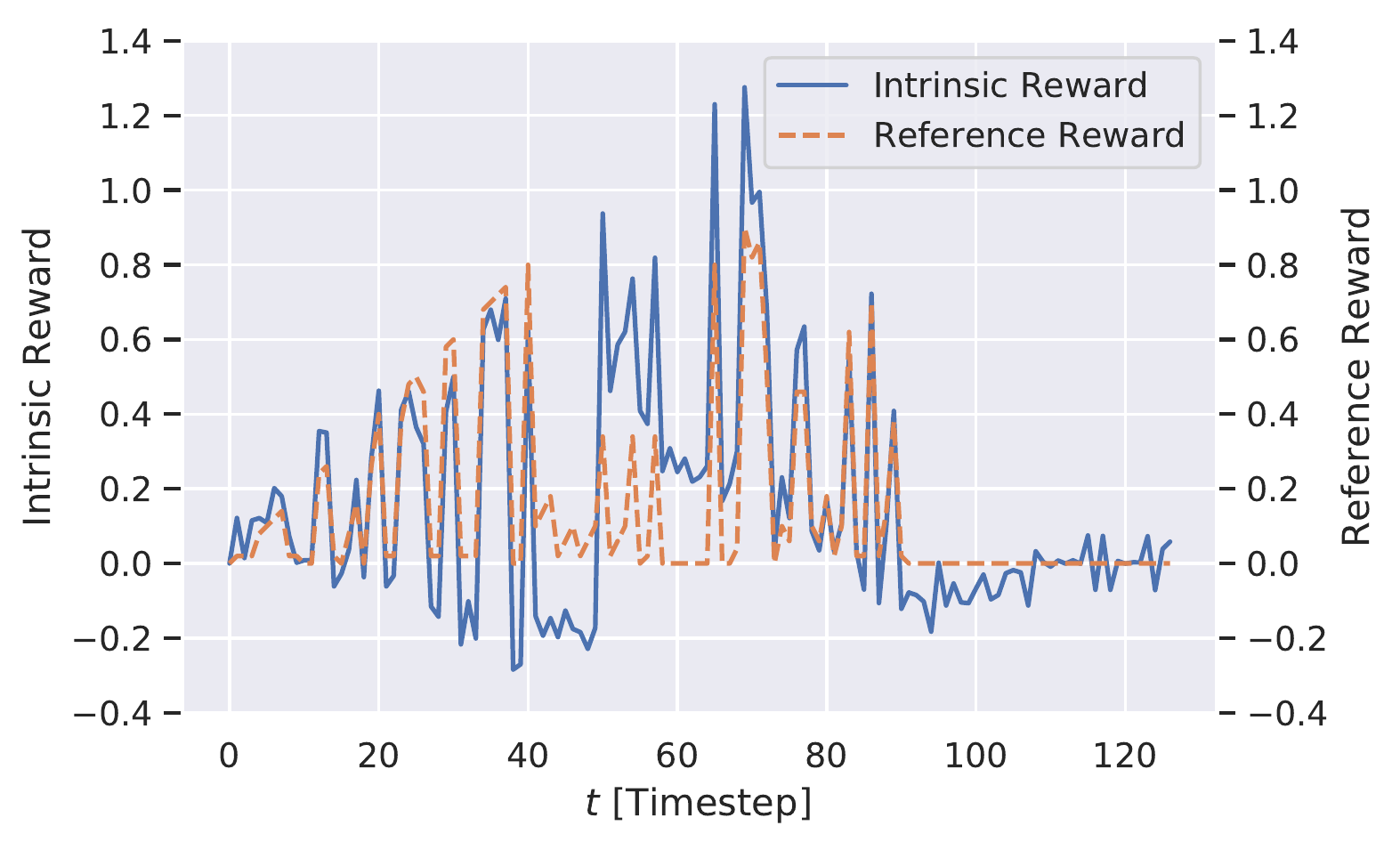}
\caption{\small The intrinsic reward (Eqn~\ref{eq:tomato_reward}) plotted against an engineered reward, which in this case is the amount of moisture gained by the tomato plant the agent just watered. \textbf{Gist:} the $h$-Potential captures useful information about the environment, which can then be utilized to define intrinsic rewards.}\label{fig:tomatoworld_intr_reward}
\vspace{-28pt}
\end{wrapfigure}
\beq \label{eq:tomato_reward}
\hat r_t = - \{\eta(s_{t - 1} \to s_{t}) - \text{RunningAverage}_{t}[\eta]\}
\eeq
where we use a momentum of $0.95$ to evaluate the running average. 

In Fig~\ref{fig:tomatoworld_intr_reward}, we plot for a random trajectory the intrinsic reward $\hat r_{t}$ against a reference reward, which in this case is the moisture gain of the plant the agent just watered. Further, we observe the reward function dropping significantly at around the 90-th iteration - this is precisely when all plants have died. This demonstrates that the $h$-potential can indeed be useful for defining intrinsic rewards. 

\subsubsection{Sokoban} \label{app:sokoban}

The environment state comprises five $10 \times 10$ binary images, where the pixel value at each location indicates the presence of the agent, a box, a goal, a wall and empty space. The layout of all sprites are randomized at each environment reset, under the constraint that the game is still solvable \citep{schrader2018sokoban}. The $h$-potential is parameterized by a two-layer deep and 512-unit wide network, which is trained on 4096 trajectories of length 512 for 20000 steps of Adam (learning rate: 0.0001). The batch-size is set to 256 and we use the trajectory regularizer ($\lambda = 0.05$) to regularize our model.

\begin{figure}
\centering
\begin{subfigure}{0.48\textwidth}
\centering
  \includegraphics[width=1.\textwidth]{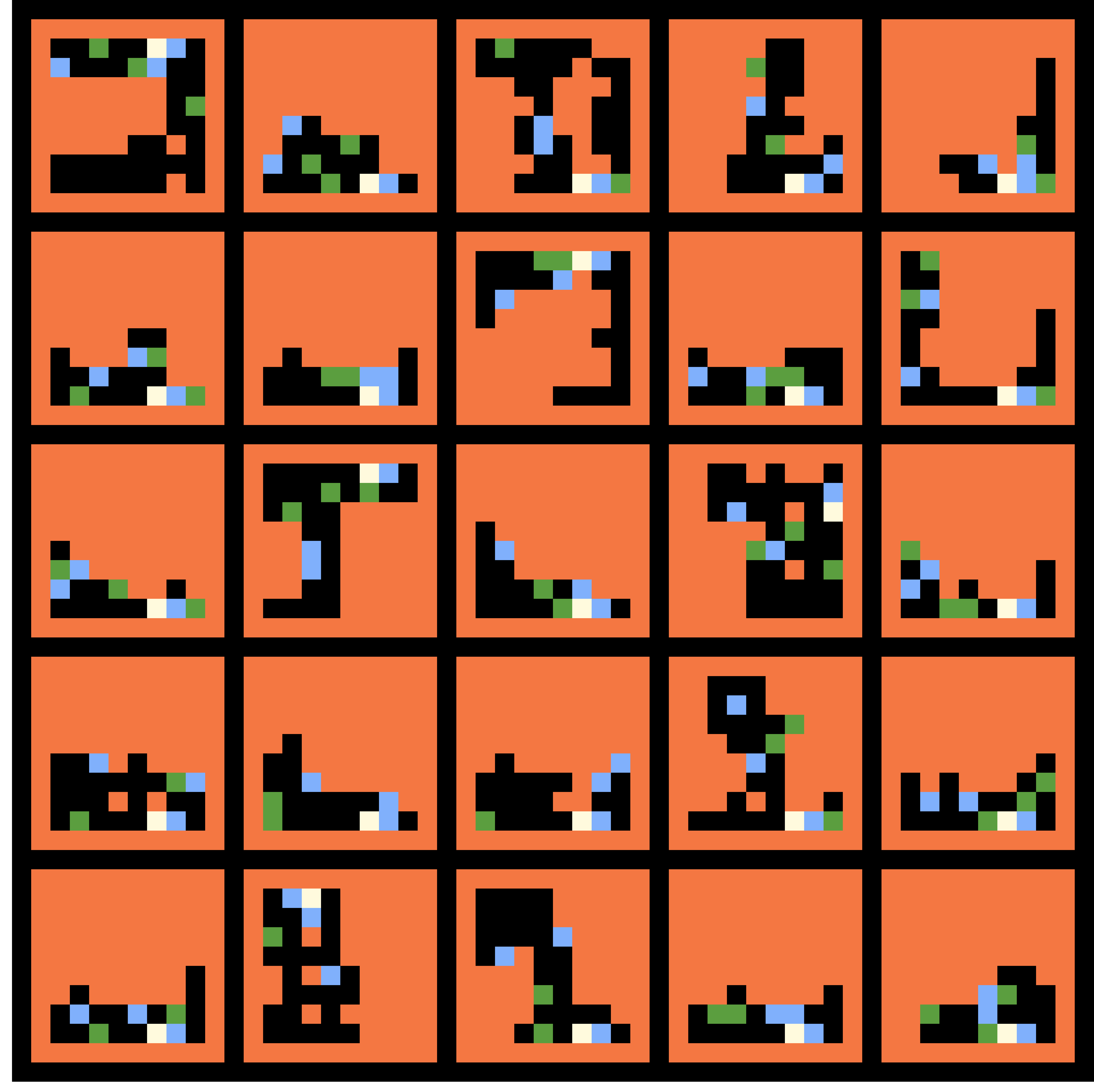}
\caption{\small States at time $t$. \label{fig:sokoban_trans_t}}
\end{subfigure}\hfill
\begin{subfigure}{0.48\textwidth} 
\centering
\includegraphics[width=1.\textwidth]{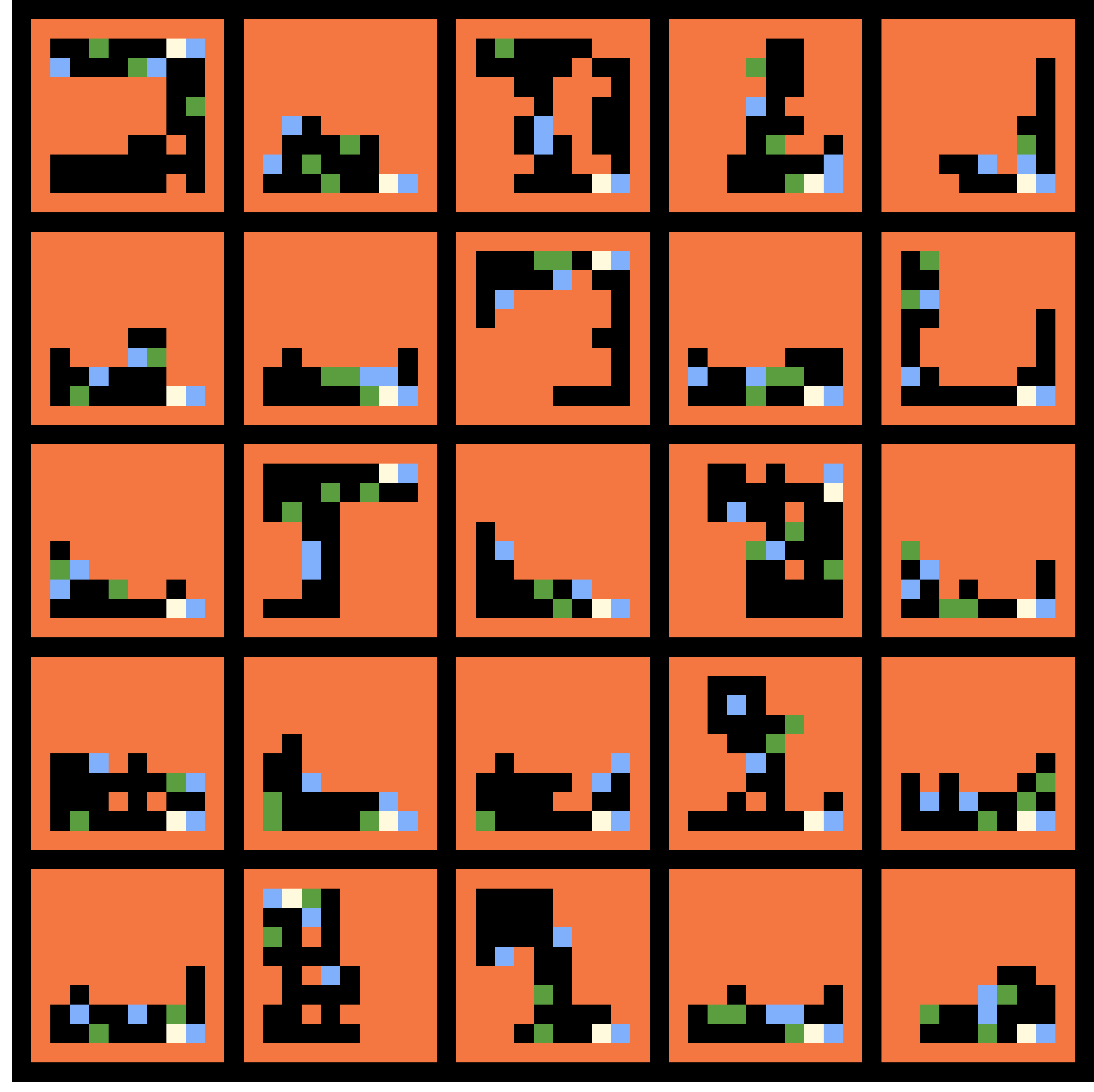}
\caption{\small States at time $t + 1$. \label{fig:sokoban_trans_tp1}}
\end{subfigure}
\caption{\small Random samples from $200$ transitions that cause the largest increase in the $h$-potential (out of a sample size of $8000$ transitions). The orange, white, blue and green sprites correspond to a wall, the agent, a box and a goal marker respectively. \textbf{Gist:} pushing boxes against the wall increases the $h$-potential. \label{fig:sokoban_trans}}
\vspace{-15pt}
\end{figure}

\subsection{Continous Environments}
\subsubsection{Under-damped Pendulum} \label{app:damped_pendulum}

\begin{figure}
\centering
\begin{subfigure}{0.48\textwidth}
\centering
  \includegraphics[width=1.\textwidth]{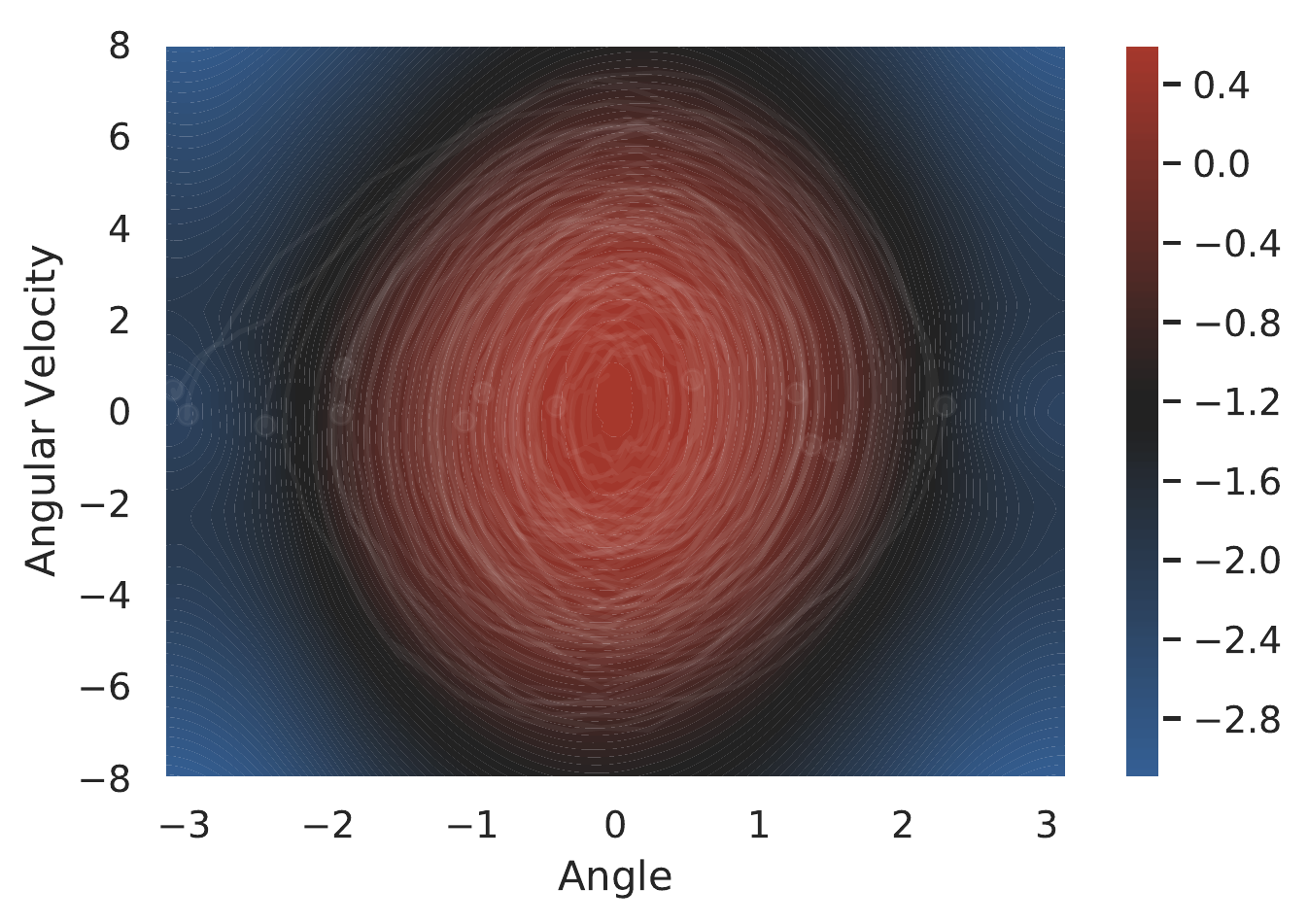}
\caption{\small Learned $h$-Potential as a function of the state-space $(\theta, \dot \theta)$. Overlaid are trajectories from a random policy. \label{fig:pend_pot_trace}}
\end{subfigure}\hfill
\begin{subfigure}{0.48\textwidth} 
\centering
\includegraphics[width=1.\textwidth]{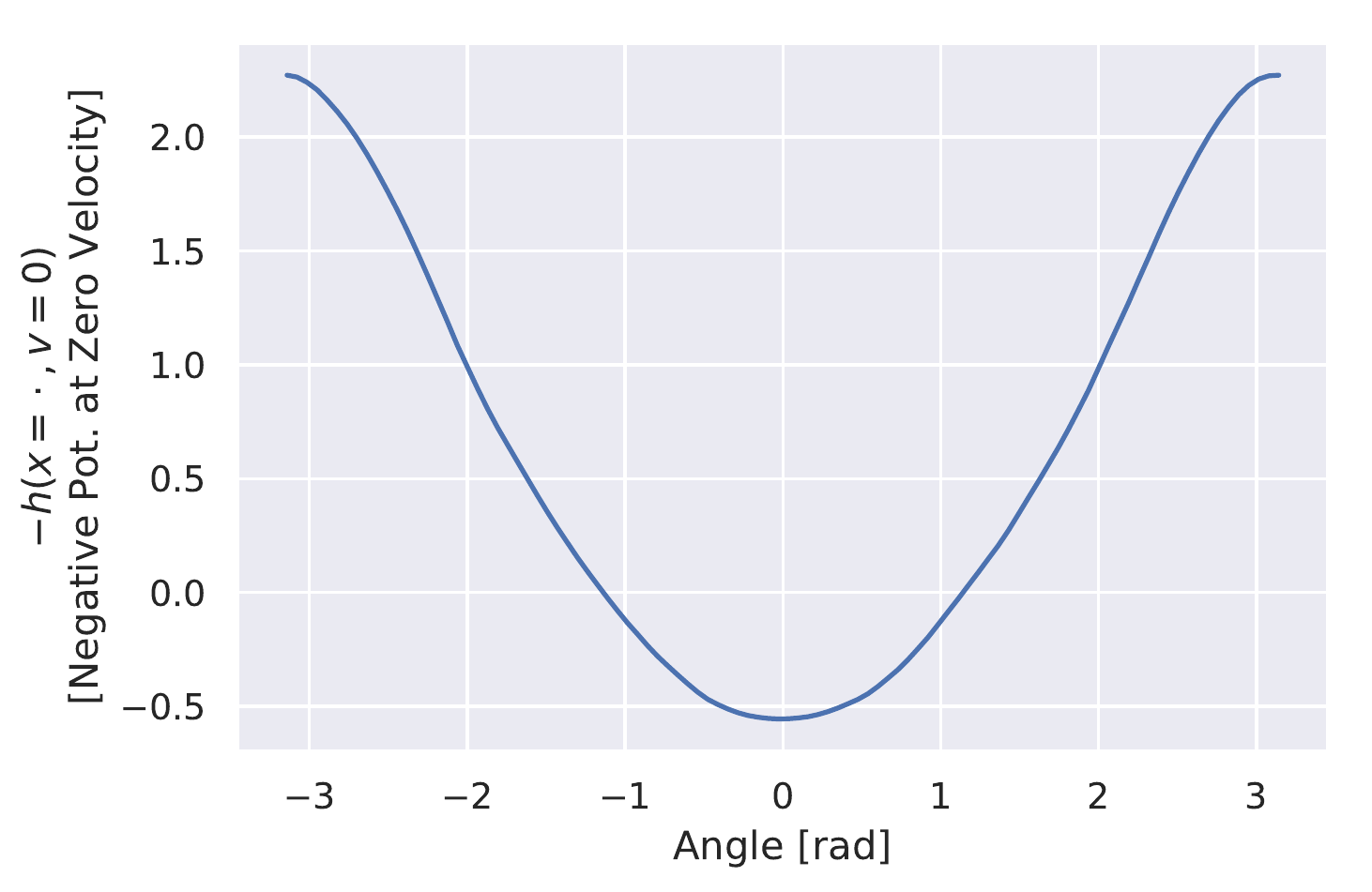}
\caption{\small Negative of the learned $h$-Potential as a function of $\theta$ when $\dot \theta = 0$. \label{fig:pend_pot_at_zvel}}
\end{subfigure}
\caption{\small \textbf{Gist:} the learned $h$-Potential takes large values around $(\theta, \dot \theta) = \mathbf{0}$, since that is where most trajectories terminate due to the effect of damping.  \label{fig:pend_pot}}
\end{figure}

\textbf{Under-damped Pendulum.} The environment considered simulates an under-damped pendulum, where the state space comprises the angle\footnote{$\theta$ is commonly represented as $(\cos(\theta), \sin(\theta))$ instead of a scalar.} $\theta$ and angular velocity $\dot \theta$ of the pendulum. The dynamics are governed by the following differential equation where $\tau$ is the (time-dependent) torque applied by the agent and $m$, $l$, $g$ are constants: 
\beq
\Ddot{\theta} = \frac{-3g}{2l} \sin(\theta) + \frac{3\tau}{ml^2} - \alpha \dot \theta
\eeq
We adapt the implementation in OpenAI Gym \citep{brockman2016openai} to add an extra term $\alpha \dot \theta$ to the dynamics to simulate friction. In our experiments, we set $g = 10$, $m = l = 1$, $\alpha = 0.1$ and the torque $\tau$ is uniformly sampled iid. from the interval $[-2, 2]$. 
\begin{wrapfigure}{r}{0.5\textwidth}
\centering
\includegraphics[width=0.5\textwidth]{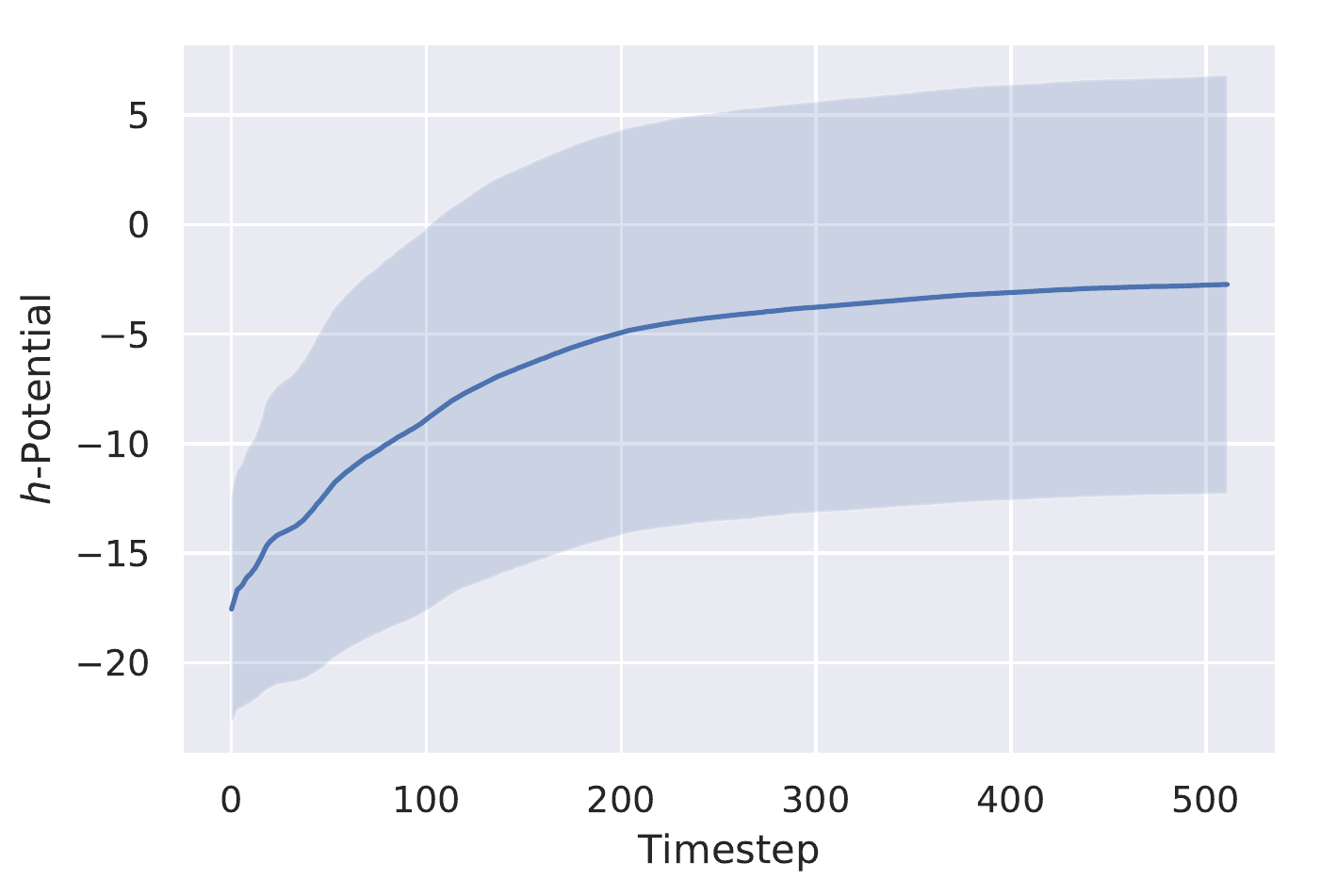}
\caption{\small $h$-Potential averaged over $8000$ trajectories, plotted against timestep $t$; shaded band shows the standard deviation. \textbf{Gist:} as required by its objective (Eqn~\ref{eq:objective_wo_reg}), the $h$-Potential must increase in expectation along trajectories.}\label{fig:sokoban_pot}
\vspace{-40pt}
\end{wrapfigure}


The $h$-Potential is parameterized by a two-layer 256-unit wide ReLU network, which is trained on 4096 trajectories of length 256 for 20000 steps of stochastic gradient descent with Adam (learning rate: 0.0001). The batch-size is set to 1024 and we use the trajectory regularizer with $\lambda = 1$ to regularize the network. Fig~\ref{fig:pend_pot_trace} plots the learned $h$-potential (trained with trajectory regularizer) as a function of the state $(\theta, \dot \theta)$ whereas Fig~\ref{fig:pend_pot_at_zvel} shows the negative potential for all angles $\theta$ at zero angular velocity, i.e. $\dot \theta = 0$. We indeed find that states in the vicinity of $\theta = 0$ have a larger $h$-potential, owing to the fact that all trajectories converge to $(\theta, \dot \theta) = \mathbf{0}$ for large $t$ due to the dissipative action of friction. 


\subsubsection{Continuous Mountain Car} \label{app:mountain_car}
The environment\footnote{We adapt the implemetation due to \citet{brockman2016openai}, available here: \texttt{github.com/openai/gym/blob/master/gym/envs/classic\_control/continuous\_mountain\_car.py}} considered is a variation of Mountain Car \citep{sutton2011reinforcement}, where the state-space is a tuple $(x, \dot x)$ of the position and velocity of a vehicle on a mountainous terrain. The action space is the interval $[-1, 1]$ and denotes the \emph{force} $f$ applied by the vehicle. The dynamics of the modified environment is given by the following equation of motion: 
\beq
\Ddot{x} = \zeta f - 0.0025 \cos{3x} - \alpha \dot x
\eeq
where $\zeta$ and $\alpha$ are constants set to $0.0015$ and $0.1$ respectively, and the velocity $\dot x$ is clamped to the interval $[-0.07, 0.07]$. Our modification is the last $\alpha \dot x$ term to simulate friction. Further, the initial state $(x, \dot x)$ is sampled uniformly from the state space $\cS = [-1.2, 0.6] \times [-0.07, 0.07]$. This can potentially be avoided if an exploratory policy is used (instead of the random policy) to gather trajectories, but we leave this for future work. 

The $h$-potential is parameterized by a two-layer 256-unit wide ReLU network, which is trained on 4096 trajectories of length 256 for 20000 steps of stochastic gradient descent with Adam (learning rate: 0.0001). The batch-size is set to 1024 and we use the trajectory regularizer with $\lambda = 1$. 

\subsection{Stochastic Processes} \label{app:jko}
The environment state comprises two scalars, the $x_1$ and $x_2$ coordinates of the particle's position $\vx$. The potential is given by: 
\beq \label{eq:rw_pot}
\Psi(\vx) = \frac{x_1^2}{20} + \frac{x_2^2}{40}
\eeq
corresponding to a two dimensional Ornstein-Uhlenbeck process, and $\sqrt{2\beta^{-1}}$ is set to $0.3$.
\begin{wrapfigure}{r}{0.5\textwidth}
\centering
\includegraphics[width=0.5\textwidth]{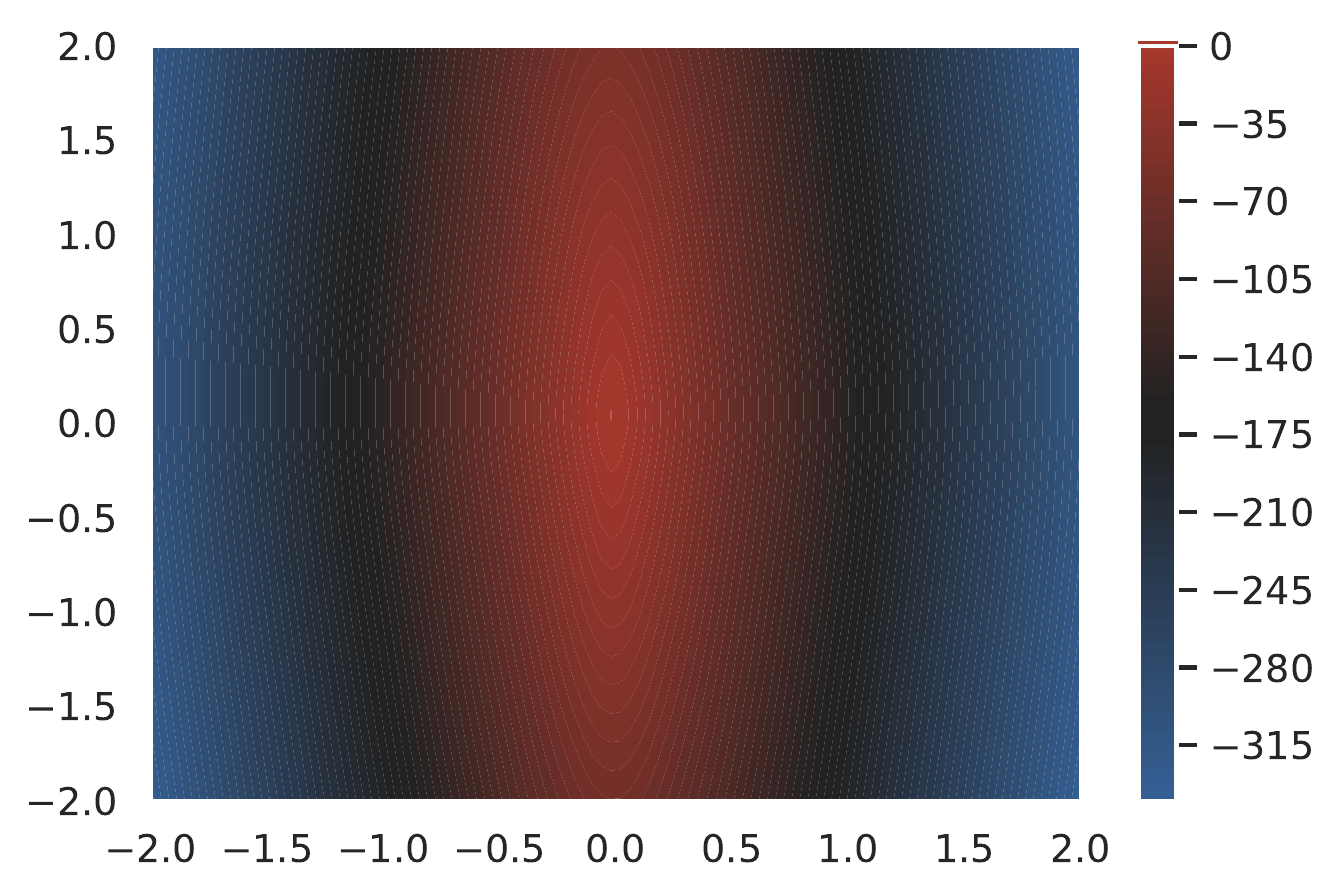}
\caption{\small Learned $h$-Potential as a function of position $\vx$. Observe the qualitative similarity to the potential $\Psi$ defined in Eqn~\ref{eq:rw_pot}.}\label{fig:rw_pot_filled}
\end{wrapfigure}

We train a two-layer deep, 512-unit wide network on 8092 trajectories of length 64 for 20000 steps of stochastic gradient descent with Adam (learning rate: 0.0001). The batch-size is set to 1024 and the network is regularized by weight decay (with coefficient $0.0005$). Fig~\ref{fig:rw_pot_filled} shows the learned $h$-potential as a function of position $\vx$. Fig~\ref{fig:rw_lyapunov} compares the free-energy functional with the learnt arrow of time given by the linearly scaled $H$-functional. To obtain the linear scaling parameters for the $H$, we find parameters $w$ and $b$ such that $\sum_{t = 0}^{N} (w H[\rho(\cdot, t)] + b - F[\rho(\cdot, t)])^2$ is minimized (constraining $w$ to be positive). 

\end{appendix}
\end{document}